\documentclass[a4paper,fleqn]{cas-dc}

\usepackage{amsmath,amssymb,amsfonts}
\usepackage{mathtools}
\usepackage{bm}
\usepackage{amsthm}
\interdisplaylinepenalty=2500
\allowdisplaybreaks
\usepackage{amsmath,amssymb,amsthm,mathtools,graphicx}
\allowdisplaybreaks
\newtheorem{lemma}{Lemma}
\newtheorem{proposition}{Proposition}
\usepackage{graphicx}
\usepackage{booktabs}
\usepackage{multirow}
\usepackage{soul}
\usepackage{array}
\usepackage{xcolor,colortbl}
\usepackage{changepage,threeparttable}
\usepackage[caption=false,font=footnotesize]{subfig}
\usepackage{amsmath, amssymb}
\usepackage{stfloats}   
\usepackage{changepage} 
\usepackage{float}
\usepackage{cite}
\usepackage{cleveref}
\usepackage{makecell,adjustbox}
\usepackage[most]{tcolorbox}      

\tcbuselibrary{breakable,skins}   
\usepackage{amsmath,amssymb,amsthm,mathtools}



\usepackage[protrusion=true,expansion=true]{microtype}
\usepackage{enumitem} 
\setlist{nosep,leftmargin=*,itemsep=0pt,topsep=2pt,parsep=0pt}

\usepackage{IEEEtrantools} 
\usepackage{mathtools}     
\usepackage[protrusion=true,expansion=true]{microtype}
\usepackage{scrextend}
\makeatletter
\newcommand{\removelatexerror}{\let\@latex@error\@gobble}

\newcommand{\tool}{\textsc{BRIDGE}\xspace}

\usepackage{placeins}

\let\Oldsection\section
\renewcommand{\section}{\FloatBarrier\Oldsection}

\makeatother
\theoremstyle{plain}
\newtheorem{theorem}{Theorem}
\theoremstyle{definition}
\newtheorem{definition}{Definition}
\newtheorem{corollary}{Corollary}
\theoremstyle{remark}
\newtheorem{assumption}{Assumption}
\newtheorem{remark}{Remark}
\setlist{nosep,leftmargin=1.25em,labelindent=0pt}

\begin{document}

\def\floatpagepagefraction{1}
\def\textpagefraction{.001}

\shorttitle{BRIDGE}
\shortauthors{Le \textit{et al.}}
\title [mode = title]{BRIDGE: Budget-aware Reasoning via Intermediate Distillation with Guided Examples}

\author{Xuan-An Le}
\ead{22028131@vnu.edu.vn}

\affiliation{organization={Faculty of Information Technology, VNU University of Engineering and Technology},
    city={Hanoi},
    country={Vietnam}}

\author{Minh-Nam Tran}
\ead{23021646@vnu.edu.vn}


\author{Son Nguyen}
\ead{sonnguyen@vnu.edu.vn}
\cormark[1]


\cortext[cor1]{Corresponding author}

\begin{abstract}
Distilling knowledge from large proprietary models (e.g., GPT-4) to tiny deployable models ($\le$1B parameters) faces a critical \textit{capacity-budget trap}: the $1000\times$ capacity gap between teachers and students prevents effective direct transfer, while API costs prohibit extensive data collection.
We introduce \tool (Budget-Aware Reasoning via Intermediate Distillation), a two-phase framework that resolves these constraints through strategic intermediation and budget asymmetry.
In Phase~1, a mid-sized Teacher Assistant (TA; e.g., $\sim$7B) learns from the black-box teacher on a strictly limited subset of data (e.g., 3-5\%), selected via a \textit{zero-API-cost pipeline} that balances entropic difficulty and semantic diversity using only local TA inference.
In Phase~2, we exploit this asymmetry-teacher queries are expensive, whereas TA inference is free to amplify supervision: the refined TA generates synthetic rationales for the full dataset to train the tiny student.
Crucially, we apply an instruction-tuning curriculum to establish behavioral alignment in the tiny student before transferring reasoning.
Our theoretical analysis shows that \tool yields tighter generalization bounds than direct distillation when data is abundant.
Experiments across medical, legal, and financial benchmarks demonstrate consistent improvements: \tool delivers student performance gains of 28-41\%, closing the capability gap with proprietary teachers by 12-16\% while using $10\times$ fewer teacher queries.
Notably, \tool defies the conventional cost-performance frontier, surpassing direct distillation baselines that use 100\% of the budget while consuming only 5\% of the resources.
\end{abstract}

\begin{keywords}
Small Language Models \sep Black-box Knowledge Distillation \sep Tiny Language Models \sep Instruction Tuning \sep Coverage-Guided Sampling \sep Chain-of-Thought \sep Domain Adaptation
\end{keywords}

\maketitle

\section{Introduction}

Large language models (LLMs), such as GPT-4~\cite{openai_gpt4_docs}, Gemini~\cite{gemini2024}, Grok~\cite{xai2023grok}, and DeepSeek~\cite{deepseek2024}, achieve remarkable performance in reasoning and general-purpose capabilities~\cite{wei2022chain,kojima2022large}. However, their deployment in real-world scenarios is increasingly constrained by privacy requirements, latency sensitivity, and prohibitive API costs~\cite{zhang2024tinyllama,xia2024sheared}. 
This has accelerated a paradigm shift toward ``\textit{Tiny Models}'' ($\le$1B parameters) designed for efficient on-device inference~\cite{zhang2024tinyllama,xia2024sheared}. 
Yet, effectively transferring the complex capabilities of trillion-parameter teachers to such compact students remains a formidable challenge.

This transfer faces two considerable challenges, which we term the \textit{capacity-budget trap}. 
First, the capacity gap between proprietary teachers ($>$1T parameters) and tiny students ($<$1B) exceeds 1000$\times$ far beyond where direct distillation succeeds. 
At this scale, direct distillation notoriously fails because the student lacks the representational capacity to approximate the teacher's complex decision boundaries~\cite{mirzadeh2019improvedknowledgedistillationteacher, cho2019efficacy}.
Second, the \textit{budget constraint} arises because modern experts are black-box APIs with significant per-token costs. While overcoming the capacity gap typically demands massive supervision~\cite{gu2024minillm,agarwal2024gkd}, strict budgets ($B$) force a compromise: one must choose between insufficient supervision (low cost) or prohibitive expense (high coverage).

Existing methods attempt to optimize \textit{direct} teacher-to-student transfer. Black-box approaches include instruction tuning~\cite{ouyang2022instructgpt}, rationale distillation~\cite{hsieh-etal-2023-distilling,wei2022chain}, and adversarial generation~\cite{lion}. 
These approaches require massive supervision.
Meanwhile, white-box methods like ULD~\cite{ULD} align cross-tokenizer vocabularies. Yet all share a fundamental limitation: they address signal quality rather than structural incompatibility. Algorithmic refinements cannot enable a sub-billion parameter model to directly internalize the latent representations of a trillion-parameter expert under conditions of data poverty.

To solve this, we leverage a critical yet overlooked budget asymmetry: while teacher queries are expensive, inference on locally-hosted intermediate models is free. 
Building on the principle of \textit{scaffolding}, we introduce \tool (Budget-Aware Reasoning via Intermediate Distillation), a two-phase framework that decomposes the distillation problem into manageable stages.

In \textit{Phase 1 (Apprenticeship)}, a mid-sized Teacher Assistant (TA, e.g., $\sim$7B) bridges the capacity gap by learning from the Teacher on a strategically selected subset of data. We employ a selection pipeline that incurs zero API cost-relying solely on local TA forward passes and clustering-to balance difficulty (entropy) and diversity while maximizing information gain per budget unit. 
In \textit{Phase~2 (Tutoring)}, we exploit the budget asymmetry: the refined TA labels the \emph{full} dataset at zero marginal cost to supervise the tiny student. Crucially, we enforce an \textit{instruction-first curriculum}, ensuring the tiny student achieves behavioral alignment before attempting to parse complex reasoning traces.

We evaluate \tool across three specialized domains with widely-used benchmarks: medical (MedConceptsQA)~\cite{shoham2024medconceptsqa}, legal (MMLU-Law)~\cite{hendrycks2020mmlu}, and financial (FOMC)~\cite{tang2025finmteb} distilling popular powerful LLMs including GPT-4, Gemini-2.5 and DeepSeek-V3, into sub-1B (including BLOOMZ-560M, OPT-350M, and Pythia-410M) students via 7B-13B TAs (including Qwen2.5-7B-Instr. and Mistral-7B-Instr.-v0.3 ). 
Results show consistent gains: \tool achieves student gains of 23-41\% for students using \textbf{10$\times$ fewer queries}, and effectively closing the capability gaps with mid-sized TAs by 22-31\% and with proprietary teachers by 10-16\%. 
Crucially, \tool defies the conventional cost-performance pareto frontier: even at a meager 3-5\% budget, it surpasses direct distillation baselines utilizing 100\% of the resources, establishing a new standard for data-efficient knowledge transfer on the edge.

In brief, this paper makes the following contributions:
\begin{enumerate}
    \item \textbf{Methodological:} A two-phase framework decomposing capacity gaps via intermediate TAs, with budget-aware selection and instruction-first training.
    
    \item \textbf{Theoretical:} Analysis establishing when two-stage distillation achieves tighter bounds than direct transfer.
    
    \item \textbf{Empirical:} Extensive experiments showing an improvement of 23-41\% with 10$\times$ fewer queries, validated through ablations on components, scaling, and budget sensitivity.
\end{enumerate}

\section{Preliminaries}
This section formalizes the knowledge distillation paradigms relevant to our work, define the capacity gap phenomenon, and present a budget-constrained problem formulation.

\subsection{Knowledge Distillation Framework}
Knowledge distillation (KD)~\cite{hinton2015distillingknowledgeneuralnetwork} aims to transfer capabilities from a large, high-capacity teacher model $M_T$ to a compact student model $M_S$.
The methodology depends fundamentally on the level of access to the teacher.

\textbf{White-Box Distillation.}
In the traditional setting~\cite{hinton2015distillingknowledgeneuralnetwork}, the student has full access to the teacher's logits and is trained to minimize KL divergence to softened teacher distributions. This approach relies on dense signals from the teacher's full probability space--access that proprietary LLMs do not provide.

\textbf{Black-Box Distillation (Token-Based).}
Modern large language models (LLMs) like GPT-4 or Gemini often serve as black-box teachers, exposing only final text outputs via an API, without access to logits or internal states. This restriction forces a shift from distribution matching to sequence-level supervision.
\begin{itemize}
    \item \textit{Standard Instruction Tuning:} The teacher acts as an oracle to generate targets $y_T$ given an input $x$. The student minimizes the negative log-likelihood (NLL) of the token sequence:
    $$
    \mathcal{L}_{\mathrm{SFT}} = -\sum_{t=1}^{|y_T|} \log P_{M_S}(y_{t} \mid x, y_{<t})
    $$
    While effective for surface-level alignment, this approach discards the teacher's latent reasoning, often leading to superficial pattern matching~\cite{ding2023enhancingchatlanguagemodels}.

    \item \textit{Rationale Distillation:} To capture the teacher's reasoning process, this paradigm elicits a Chain-of-Thought (CoT) rationale $r$ alongside the final answer $a$~\cite{wei2022chain,hsieh-etal-2023-distilling}. The student is supervised on the joint sequence $(r, a)$, maximizing:
    $$
    \label{eq:rationale_loss}
    \mathcal{L}_{\mathrm{CoT}} = \mathcal{L}_{\mathrm{rationale}}(r \mid x) + \lambda \mathcal{L}_{\mathrm{answer}}(a \mid x, r).
    $$
    By explicitly modeling the reasoning path $r$, the student learns \textit{how} to solve the problem rather than merely memorizing the answer $a$.
\end{itemize}

\subsection{Problem Formulation}
We formalize the problem of \textit{Budget-Aware Black-Box Distillation} as follows.

\noindent\textbf{Components:} The setting is defined by three primary entities:

\begin{itemize}
\item \textbf{\textit{Dataset}:} A labeled dataset $\mathcal{D} = \{(x_i, y_i)\}_{i=1}^n$ containing $n$ input-label pairs.

\item \textbf{\textit{Black-Box Teacher} ($M_T$):} A proprietary expert LLM accessible only via API. For an input $x$, it returns a text sequence (rationale and answer) at a monetary cost.

\item \textbf{\textit{Tiny Student} ($M_S$):} A highly compact model (e.g., $\le$1B parameters) with parameters $\theta_S$ intended for deployment.
\end{itemize}

\noindent\textbf{Constraints:} We define a global query budget $B \ll n$, representing the maximum number of API calls allowed to $M_T$. 
We adopt query-level rather than token-level budgets for three reasons: (1) \emph{Practical billing alignment}: commercial APIs (e.g., OpenAI, Anthropic) increasingly use per-request pricing tiers or rate limits alongside token costs, making query counts a natural constraint; (2) \emph{Selection tractability}: query-level budgets enable discrete subset selection ($|S_B| = B$), which admits well-studied combinatorial optimization (e.g., coverage maximization), whereas token budgets would require solving a knapsack-like problem with variable-length outputs; (3) \emph{Reproducibility}: query counts are deterministic and model-agnostic, while token counts depend on tokenizer choice and response length variability.
Let $q(x)$ be an indicator function where $q(x)=1$ if $M_T$ is queried for $x$, and $0$ otherwise. The budget constraint is:
$
\sum_{i=1}^n q(x_i) \le B.
$
Crucially, this budget applies only to $M_T$. We assume access to local computational resources where inference or training on open-weights models is effectively cost-free relative to the API budget.

\noindent\textbf{Objective:}
Our goal is to optimize the parameters $\theta_S$ of the student model to maximize performance on a validation metric $\mathcal{M}$: 
$$
\theta_S^* = \operatorname*{argmax}{\theta_S} \mathbb{E}{(x, y) \sim \mathcal{D}_{\text{test}}} [\mathcal{M}(M_S(x; \theta_S), y)] 
$$
such that $\text{Queries}(M_T) \le B$.
The core difficulty lies in the \textit{capacity-budget trade-off}. Direct distillation requires massive supervision ($B \approx n$) to overcome the capacity gap between $M_T$ and $M_S$, but the budget constraint ($B \ll n$) prohibits this. A valid solution must therefore amplify the sparse supervision signal $B$ into a dense training signal for $M_S$ without exceeding the API limit.


\section{Budget-Aware Reasoning via Intermediate Distillation with Guided Examples}
\newcommand{\blend}{\alpha}            
\newcommand{\ratweight}{\beta}         

\subsection{Approach Overview}

\begin{figure*}[t]
    \centering
    \includegraphics[width=1\linewidth]{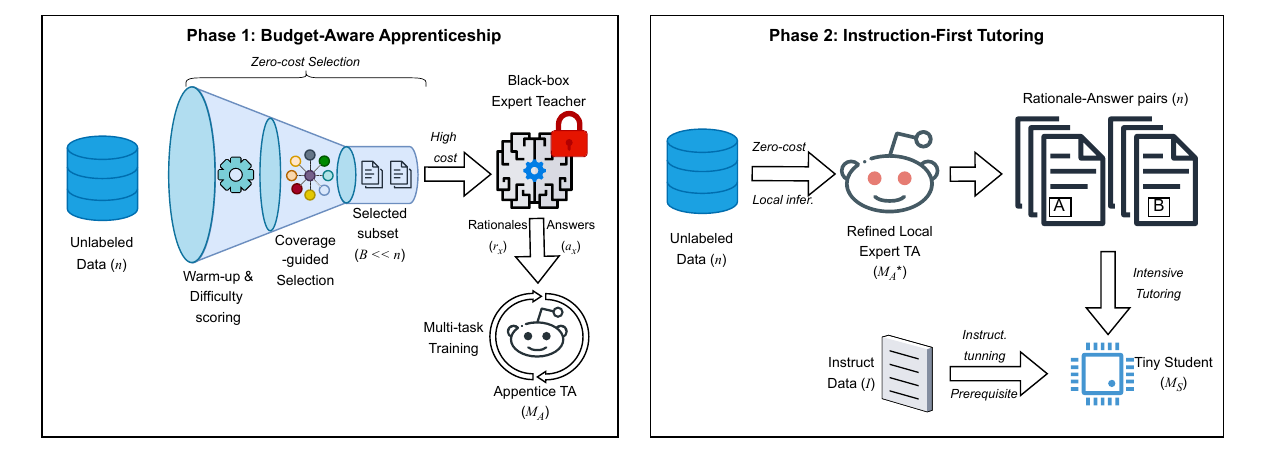}
    \caption{\tool: \textbf{Approach Overview}}
    \label{fig:approach-overview}
\end{figure*}

Figure~\ref{fig:approach-overview} illustrates the \tool architecture, designed to resolve the \textit{capacity-budget trap} inherent in distilling knowledge from trillion-parameter experts ($M_T$) to sub-billion parameter students ($M_S$).
Our main idea is \textit{Budget Asymmetry}, while proprietary teacher queries are expensive (limited to $B$), inference on local open-weights models is effectively free. We exploit this asymmetry by introducing an intermediate Teacher Assistant ($M$) and decomposing the distillation pipeline into two distinct phases:

\textbf{Phase 1 (Budget-aware Apprenticeship).} A mid-sized TA (e.g., $\sim$7B parameters) acts as an \textit{active learner}, identifying and learning from the $B$ most informative examples provided by the black-box Teacher. Through coverage-guided sampling that balances \textit{predictive difficulty} and \textit{semantic diversity}, the TA learns not only final answers but also complete reasoning traces, acquiring knowledge that goes far beyond surface pattern matching.

\textbf{Phase 2 (Instruction-First Tutoring).} The TA acts as a \textit{locally hosted expert}. It labels the entire dataset ($n$) at no marginal cost to train the student extensively. In particular, we enforce an instruction-first curriculum to ensure the tiny student ($\le$1B parameters) can interpret reasoning traces.

\subsection{Phase 1: Budget-Aware TA Apprenticeship}

The objective of Phase 1 is to train TA model $M$ under a limited budget of teacher queries ($B \ll n$).
We formulate this process as a constrained active learning problem in which $M$ must identify a subset $S_B \subset \mathcal{D}$ such that querying $M_T$ on $S_B$ maximizes $M$'s generalization capability.

Inspired by human apprenticeship, where trainees learn most from observing experts on \emph{challenging and diverse} problems, we design a four-stage selection pipeline that balances difficulty (where the TA struggles or is uncertain) and diversity (broad semantic coverage). The proposed four-stage curriculum exploits a critical cost asymmetry: while querying the Teacher $T$ is expensive, inference with $M$ is free. In this pipeline, $M$ stabilizes, self-diagnoses, and optimizes the selection \emph{before} consuming any budget:

\begin{enumerate}
\item \textbf{\textit{Warm-up}} stage: Stabilize $M$ via self-supervised learning on a disjoint subset to establish a baseline for uncertainty estimation;

\item \textbf{\textit{Difficulty Scoring}} stage: Use the warmed-up $M$ to quantify example difficulty using predictive confidence;

\item \textbf{\textit{Coverage-guided Selection}} stage: Cluster the embedding space $z(x) = \mathrm{repr}_{M}(x)$ to enforce semantic coverage, then rank examples by a joint difficulty-diversity score;

\item \textbf{\textit{Rationale Distillation}} stage: Query $T$ exactly once on the optimized subset $S_B$ to obtain rationale-answer pairs $(r_x, a_x)$, and train $M$ using a multi-task objective.
\end{enumerate}

\subsubsection{Warm-up Stage}


Reliable difficulty scoring requires a calibrated model. Indeed, a randomly initialized TA yields degenerate uncertainty estimates, rendering active selection ineffective.
To mitigate this cold-start problem, we sample a small subset $\mathcal{D}_{\text{warmup}} \subset \mathcal{D}$ (typically 10\% of data) and train $M$ using only self-supervised or label-only objectives that incurs zero teacher cost. For QA tasks, this could be next-token prediction on concatenated (input, label) sequences or minimizing NLL on ground-truth answers $y^*(x)$.
Crucially, to maintain strict budget discipline, we \emph{permanently exclude} $\mathcal{D}_{\text{warmup}}$ from teacher queries. All subsequent selection operates on the reduced pool
$\mathcal{D}' \;=\; \mathcal{D} \setminus \mathcal{D}_{\text{warmup}}$
ensuring that the query budget $B$ is reserved exclusively for novel, unseen examples. 
%

\subsubsection{Difficulty Scoring Stage}


To align the curriculum with TA's specific shortcomings, flaws, or limitations, we prioritize examples where the warmed-up model $M$ exhibits high prediction error.
For each $(x,y^*(x)) \in \mathcal{D}'$, we compute TA's predictive distribution $p_\theta(y\mid x)$ and define a loss-based difficulty score:
$$
\mathrm{Diff}(x)
  = - \log p_\theta\big(y^*(x)\mid x\big).
$$
High difficulty indicates that $M$ is either uncertain (high entropy) or confidently incorrect. Both cases benefit from teacher supervision.

To ensure numerical stability and comparability when combining with diversity metrics, we normalize these raw scores to $[0,1]$ over $\mathcal{D}'$:
$$
\bar{D}(x)
  = \frac{\mathrm{Diff}(x)}
         {\max_{z\in\mathcal{D}'} \mathrm{Diff}(z) + \varepsilon},
$$
where $\varepsilon>0$ ensures numerical stability. 
This scoring process relies exclusively on local forward passes of $M$, allowing us to evaluate the informational utility of every example in $\mathcal{D}'$ without consuming any of the teacher query budget.

\paragraph{Labeled data requirement.}
This score requires ground-truth labels $y^*(x)$, restricting \tool to labeled settings. We justify this design choice in Section~\ref{sec:experiments} (Threats to Validity): our target domains are inherently labeled, and loss-based difficulty provides a stronger signal than entropy alone.

\subsubsection{Coverage-guided Selection Stage}

Relying solely on difficulty scores entails a risk of semantic collapse. Indeed, selecting only the hardest examples risks disproportionately concentrating queries in a narrow and high-error region (e.g., a specific medical subdomain), leaving other areas under-supervised. 
%
%

To mitigate this and ensure the TA generalizes well in Phase 2, we enforce the data selection constraints of both \textit{difficulty} (to maximize learning) and \textit{diversity} (to ensure coverage).
%
%
This balances exploitation (query where TA struggles) with exploration (maintain broad representation).

\textbf{Semantic Clustering.}
%
%
First, we partition the dataset $\mathcal{D}'$ into $C$ distinct semantic regions $\{R_1, \dots, R_C\}$ via clustering (e.g., $K$-means or \textit{GMM}) on the fixed TA embeddings $z(x) = \mathrm{repr}_{M}(x)$.
%
%
%
To guarantee equitable coverage, we allocate the total budget $B$ across these regions proportional to their probability mass. We assign each region a quota $q_c$:
%

$$q_c = \max\Big\{ q_{\min},\; \big\lfloor w_c B \big\rfloor \Big\}$$
where $w_c = |R_c| / |\mathcal{D}'|$ represents the region's mass/density, and $q_{\min}$ acts as a protective floor to prevent mode collapse in sparse regions, $q_{\min} = \max\{1, \lfloor B/(10C)\rfloor\}$.
%
We then renormalize the $\{q_c\}$ if necessary so that $\sum_{c=1}^C q_c = B$. This defines a simple, interpretable budget allocation such that large regions get more queries, but no region is starved.

\textbf{Difficulty-Diversity Scoring and Ranking.}
Within each region $R_c$, we rank examples that are both difficult and distinctive.
This can be done via two passes:

\begin{itemize}

\item \textbf{\textit{Diversity seeding}:} Within each region $R_c$, we select a small number $m_0$ of \emph{seed} examples using Farthest-Point Sampling~\cite{sener2018coreset}. This yields an initial seed set $S_0$ that is well-spread over the representation manifold.

\item \textbf{\textit{Score Calculation}:} For each $x\in\mathcal{D}'$, we calculate a normalized diversity score $\bar{n}(x)$ based on its Euclidean distance to the nearest seed:
$$
\bar{n}(x)
  = \frac{\min_{z\in S_0} d_\phi(x,z) - n_{\min}}
         {n_{\max} - n_{\min} + \epsilon},
$$
where $n_{\min}$ and $n_{\max}$ are the minimum and maximum nearest-seed distances over $\mathcal{D}'$, $\epsilon>0$ is a small value to avoid division by zero.
%
%
We then synthesize the final selection metric $s(x)$ as a convex combination of the difficulty score (from \textit{Difficulty Scoring} stage) and the normalized diversity score:
$$
s(x)
  = \blend\,\bar{D}(x) + (1-\blend)\,\bar{n}(x),
$$
where $\alpha \in [0,1]$ modulates the trade-off. 
\end{itemize}
Intuitively, this function rewards examples that TA finds hard ($\bar{D}(x)$) but which are also semantically distinct from the examples already implicitly covered by the seeds ($\bar{n}(x)$).

\textbf{Region-wise top-$q_c$ Selection.}
Finally, in each region $R_c$ we sort examples by $s(x)$ in descending
order and select the top-$q_c$ examples:
$$
S_c = \operatorname*{arg\,top}_{x\in R_c}^{{}\;q_c} s(x).
$$
The overall teacher-queried subset is then
$$
S_B = \bigcup_{c=1}^C S_c,
\qquad |S_B| = B.
$$




\subsubsection{Multi-Task Rationale Distillation Stage}

With the strategically optimized subset $S_B$ established, we query the black-box Teacher \emph{exactly once} on each $x\in S_B$, requesting both a Chain-of-Thought (CoT) rationale $r_x$ and final answer $a_x$. This step consumes $B$ units of the query budget, finalizing the data acquisition phase.

To maximize the utility of these expensive annotations, we treat the rationale as a necessary scaffold for the answer. We train TA to maximize a multi-task objective that models the joint probability of the reasoning trace and the conclusion:
$$
\mathcal{L}= \mathrm{NLL}(r_x\mid x)+ \beta \mathrm{NLL}(a_x\mid x, r_x)
$$
Here the first term forces TA to internalize the teacher's deductive process, while the second term conditions the final answer $a_x$ on both the input $x$ and the generated rationale $r_x$. The hyperparameter  $\beta>0$ balances the two tasks~\cite{ho2022large,shridhar2023distilling}.

Recent work~\cite{hsieh2023distillingstepbystepoutperforminglarger} shows rationale-augmented distillation enables smaller models to internalize \emph{reasoning processes}, not just input-output mappings. This yields better generalization and interpretability. Our experiments (Section~6.2.1) confirm multi-task training is critical.


This phase yields the fully refined Teacher Assistant, denoted $M^\star$. This model, having learned from the most informative examples in the domain, is now capable of acting as a high-fidelity surrogate for the teacher in the subsequent intensively tutoring phase.

\subsection{Phase 2: Instruction-First Tutoring}

The goal of Phase 2 is to distill the reasoning capabilities of the refined Teacher Assistant ($M^\star$) into the deployment-ready Student ($M_S$).
By deploying $M^\star$ to label the \textbf{entire} dataset $\mathcal{D}$, we increase the training signal from $B$ examples to $n$ examples, providing the student with dense, process-oriented supervision across the full data distribution.

%

To ensure the low-capacity student effectively internalizes this dense supervision, we implement Phase 2 in three stages:

\begin{itemize}
    \item \textbf{Instruction Alignment}: Untreated tiny models ($\le$1B) often lack the baseline capability to parse complex reasoning prompts, resulting in repetition or topic drift. 
    Before distillation, we fine-tune $M_S$ on a curated instruction dataset $\mathcal{I}$ (e.g., \texttt{databricks-dolly-15k}~\cite{DatabsricksDolly}, a 15K-example dataset of diverse instruction-response pairs). This step establishes behavioral compliance, transforming $M_S$ into a reliable instruction-following agent capable of interpreting TA's outputs.
    
    \item \textbf{Synthetic Annotation}: 
    %
    We perform inference on the full dataset $\mathcal{D}$ with $M^\star$. For every input $x$, we generate and cache a rationale-answer pair $(r_x, a_x)$. This effectively transfers the TA's internalized knowledge acquired from Teacher in Phase 1.
    
    \item \textbf{Rationale Distillation}: 
    %
    We train the aligned $M_S$ on the cached dataset using the multi-task objective. By minimizing the joint loss for rationales and answers, $M_S$ learns to imitate TA's deductive process rather than merely memorizing final labels, achieving robust generalization despite its limited capacity.
\end{itemize}

\subsubsection{Instruction Alignment}

To bridge the capability gap between the tiny student and complex reasoning tasks, we first establish behavioral compliance. Particularly, we fine-tune student $M_S$ on instruction dataset $\mathcal{I}$ for $E_{\text{warmup}}$ epochs that minimizes the standard supervised loss:
$$
\mathcal{L}_{\mathrm{SFT}} = \mathrm{NLL}(y \mid x), \qquad (x,y)\in\mathcal{I}.
$$
This instruction tuning transforms the student into an instruction-following model that can produce coherent, on-topic responses. 
%
%
%
As illustrated in the examples below in Figure~\ref{fig:example-instr} (using BLOOMZ-560M), this alignment is a strict prerequisite: without it, the student fails to parse CoT format, rendering downstream distillation ineffective.

\begin{figure}
    \centering
    \includegraphics[width=1\linewidth]{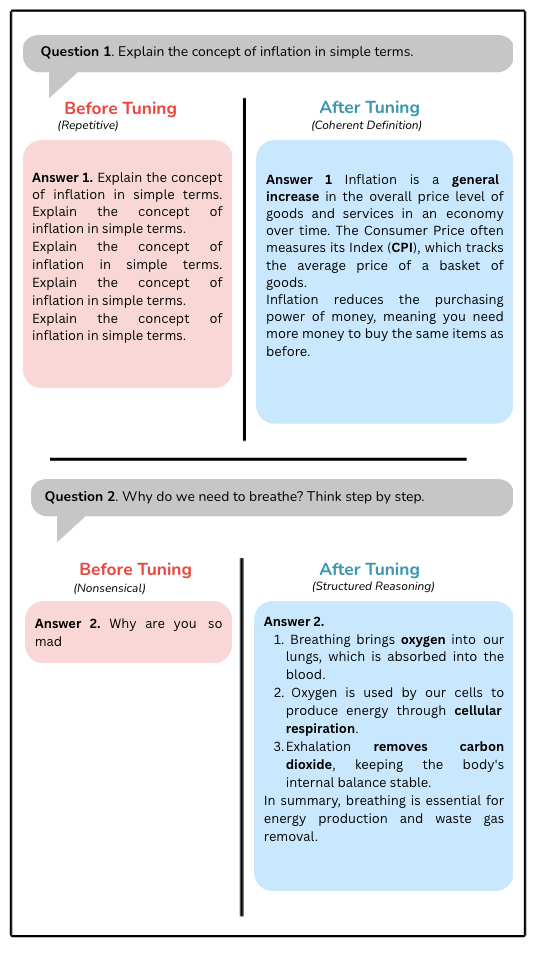}
    \caption{Example QA for Model BLOOMZ-560M before and after
   Instruction Alignment}
    \label{fig:example-instr}
\end{figure}

\subsubsection{Synthetic Annotation}

As $M^\star$ runs locally, the \textbf{entire} dataset $\mathcal{D}$ (including $\mathcal{D}_{\text{warmup}}$ excluded from teacher queries) is used to train the student. 
This constitutes \tool's key efficiency advantage, allowing for comprehensive supervision ($n$ examples) while paying for only selective supervision ($B$ teacher queries).

To avoid redundant TA inference during student training epochs, we query $M^\star$ \textbf{once} per example upfront:
$$
(r_x,a_x) \gets M^\star(x), \qquad \forall x \in \mathcal{D},
$$
storing triplets $(x, r_x, a_x)$ in cache $\mathcal{C}$. This one-time cost (single TA forward pass per example) amortizes across all $E$ training epochs.

\subsubsection{Rationale Distillation}

We train the aligned student $M_S$ on the cached dataset $\mathcal{C}$ using a multi-task, rationale-augmented objective.
This forces the student to mimic TA's deductive process rather than merely memorizing the final outputs.
For each cached triplet $(x, r_x, a_x) \in \mathcal{C}$, the loss function is defined as:
$$
\mathcal{L} = \mathrm{NLL}(r_x\mid x) + \beta\mathrm{NLL}(a_x\mid x, r_x),
$$
where $\beta \in (0,1]$ balances the reasoning and prediction tasks. The student generates $r_x$ first, then conditions answer $a_x$ on both input and generated rationale-mirroring the TA's (and teacher's) Chain-of-Thought process.
%
%
%



\paragraph{Theoretical Abstraction.}
For the theoretical development in Section~\ref{sec:theory} we adopt a slightly more abstract notation. We denote the black-box teacher by $M_T\in\mathcal{F}_T$, the refined teacher assistant  by $M_A\in\mathcal{F}_A$, and the final student by $M_S\in\mathcal{F}_S$. 
$M_T$ corresponds to the (black-box) Teacher, $M_A$ to the fully refined TA $M^\star$ obtained at the end of Phase~1, and $M_S$ to the deployment-ready Student. The analysis focuses on the simplified pathway $M_T \to M_A \to M_S$ and treats auxiliary warm-up and instruction-tuning steps as improving initialization and reducing optimization error; omitting them makes our bounds slightly pessimistic but does not affect the qualitative comparison between \textsc{\tool} and
direct distillation. 

For clarity, the teacher's query budget $B$ is measured at the example level: querying $M_T$ on a single input $x$ returns a rationale-answer pair and consumes one unit of budget. In the experiments (Section~\ref{sec:experiments}), we additionally track token-level costs and verify that our conclusions are robust under more precise API pricing models (e.g., rationales vs.\ answers only).

\section{Theoretical Analysis}
\label{sec:theory}

We provide a risk-decomposition analysis that characterizes \emph{when} the two-stage BRIDGE pipeline $M_T \to M_A \to M_S$ can outperform direct distillation. Rather than claiming tight bounds with precisely quantified constants which would require strong distributional assumptions rarely satisfied in practice, we aim to: (1) identify the key factors governing BRIDGE's advantage, (2) derive qualitative predictions testable in experiments, and (3) clarify the role of each pipeline component. Our analysis builds on classical learning theory~\cite{shalev2014understanding,vapnik1971uniform} and recent work on teacher-assistant distillation~\cite{mirzadeh2019improvedknowledgedistillationteacher}. Full assumptions and proofs are in Appendix~\ref{app:theory}.

\subsection{Problem Setup}

Consider a black-box teacher $M_T$ that can be queried at most $B$ times, 
and a labeled dataset of $n$ input-label pairs with $n \gg B$. We have three model 
classes with capacities $P_S \ll P_A \ll P_T$:
\[
\mathcal{F}_S \ni M_S, \qquad
\mathcal{F}_A \ni M_A, \qquad
\mathcal{F}_T \ni M_T.
\]
The teacher $M_T$ is only accessible through a black-box API that returns 
a rationale-answer pair $(r_T(x), a_T(x))$ when queried on input $x$. Our 
goal is to train a small student $M_S$ that performs well despite $B \ll n$.

BRIDGE operates in two phases:
\begin{itemize}
    \item \textbf{Phase~1 (Apprenticeship):} Train an assistant $M_A$ on 
    $B$ teacher-labeled examples selected via coverage-aware sampling.
    
    \item \textbf{Phase~2 (Tutoring):} Distill $M_A$ to student $M_S$ 
    using all $n$ inputs, amortizing the limited teacher budget.
\end{itemize}

\subsection{Risk and Approximation Error}

We measure performance using a bounded surrogate loss $\ell_a$ and define 
the \emph{risk} of a predictor $f$ as:
\[
R(f) = \mathbb{E}_{(x,y^*)}\bigl[\ell_a(f(x), y^*)\bigr],
\]
where $y^*$ denotes the ground-truth answer. The \emph{Bayes risk} is 
$R^* = \inf_f R(f)$, representing the best achievable performance.

For a hypothesis class $\mathcal{F}$ and target model $f$, the 
\emph{approximation error} captures how well $\mathcal{F}$ can represent $f$:
\[
\varepsilon_{\text{approx}}(f, \mathcal{F}) 
= \max\Bigl\{0,\; \inf_{h \in \mathcal{F}} R(h) - R(f)\Bigr\}.
\]
This is the irreducible gap between the best model in $\mathcal{F}$ and 
the target-a gap that cannot be closed by more data or better 
optimization, only by increasing model capacity.

\subsection{Multi-Task Distillation Objective}

In BRIDGE, we supervise both rationales and answers using a multi-task 
objective with weight $\beta \in (0, 1]$:
\[
\begin{aligned}
R_g^{(\mathrm{multi})}(f)
&= (1-\beta)\,\mathbb{E}_x\!\bigl[\ell_r\bigl(f(x), r_g(x)\bigr)\bigr] \\
&\quad + \beta\,\mathbb{E}_x\!\bigl[\ell_a\bigl(f(x), a_g(x)\bigr)\bigr].
\end{aligned}
\]

where $g$ is either the teacher $M_T$ or assistant $M_A$. Under mild 
regularity (Assumption~\ref{asmp:translation} in Appendix), the multi-task 
risk remains close to true risk: $|R(f) - R_g^{(\text{multi})}(f)| \le L_\ell \Delta_g$.

\subsection{Key Assumption: Two-Stage Approximation is Easier}

Following the teacher-assistant paradigm~\cite{mirzadeh2019improvedknowledgedistillationteacher}, 
we posit that the intermediate assistant reduces total approximation error:
\[
\begin{aligned}
\varepsilon_{\mathrm{approx}}(M_T,\mathcal{F}_A)
+ \varepsilon_{\mathrm{approx}}(M_A,\mathcal{F}_S)
&\le \varepsilon_{\mathrm{approx}}(M_T,\mathcal{F}_S) \\
&\quad - \Delta_{\mathrm{gap}} .
\end{aligned}
\]
for some gap $\Delta_{\text{gap}} \ge 0$. This assumption is \emph{not derived from first principles}, approximation errors are intractable to compute for neural networks but is motivated by prior work showing intermediate-capacity assistants consistently improve distillation under large capacity gaps~\cite{mirzadeh2019improvedknowledgedistillationteacher,cho2019efficacy}. We validate it indirectly: in Table~\ref{tab:fine-tuning-ta}, the TA trained on 10\% teacher rationales outperforms direct student training on 100\% ground-truth labels by 5--12 points, suggesting $\Delta_{\text{gap}} > 0$ in our settings. The assumption is most plausible when $P_T/P_S > 100\times$ and the task involves compositional reasoning.

\subsection{Coverage-Aware Selection}

To maximize the utility of budget $B$, BRIDGE employs a hybrid selection strategy combining clustering, farthest-point seeding, and difficulty scoring. We acknowledge upfront that the classical $k$-center 2-approximation guarantee~\cite{sener2018coreset} applies only to pure farthest-first traversal; the full hybrid heuristic trades formal guarantees for empirical performance. Here we characterize what \emph{can} be said theoretically.

\paragraph{Selection pipeline.}
BRIDGE partitions the data into $C$ semantic regions via clustering, allocates per-region budgets $\{q_c\}$ proportional to region mass, and within each region ranks examples by a combined score $s(x) = \alpha \bar{D}(x) + (1-\alpha) \bar{n}(x)$, where $\bar{D}(x)$ is the normalized difficulty and $\bar{n}(x)$ is the normalized distance to farthest-point seeds.

\paragraph{What theory guarantees.}
For the \emph{pure diversity} component ($\alpha = 0$), standard results apply: farthest-first traversal achieves a 2-approximation to optimal $k$-center coverage~\cite{gonzalez1985kcenter,sener2018coreset}. Under Lipschitz loss (Assumption~\ref{asmp:regularity}), this translates to a generalization bound with additive term $L \cdot \rho(S_B)$, where $\rho(S_B)$ is the coverage radius.

\paragraph{What theory does not guarantee.}
The difficulty-weighted component ($\alpha > 0$) lacks clean theoretical characterization. Prioritizing high-difficulty examples may improve learning efficiency (akin to importance sampling), but can also increase variance and degrade coverage. We model this trade-off via an additive penalty $\alpha \cdot \Delta_{\text{diff}}$ in our bounds, but $\Delta_{\text{diff}}$ is distribution-dependent and not tightly quantified.

\paragraph{Empirical resolution.}
Given this theoretical ambiguity, we rely on ablations (Table~\ref{tab:perf_sel_learning_ablation}) to validate the hybrid heuristic. Results show $\alpha \approx 0.5$ outperforms both pure coverage ($\alpha = 0$) and pure difficulty ($\alpha = 1$), suggesting the combination captures complementary signals not fully explained by either component alone.

\subsection{Main Result: When BRIDGE Wins}

Under the assumptions above (formalized in Appendix~\ref{app:theory}), we decompose BRIDGE's advantage into interpretable components. Let $R(f)$ denote the risk of model $f$. BRIDGE outperforms direct distillation when:
\[
\Delta_{\text{adv}} 
:= \Delta_{\text{struct}} + \Delta_{\text{sample}} - \Delta_{\text{overhead}} 
> 0.
\]

\paragraph{Structural benefit ($\Delta_{\text{struct}}$).}
This captures the reduced approximation error from using the assistant:
\[
\Delta_{\mathrm{struct}}
= \varepsilon_{\mathrm{approx}}(M_T,\mathcal{F}_S)
 - \varepsilon_{\mathrm{approx}}(M_T,\mathcal{F}_A)
 - \varepsilon_{\mathrm{approx}}(M_A,\mathcal{F}_S).
\]
By our key assumption, $\Delta_{\text{struct}} \ge \Delta_{\text{gap}} \ge 0$. This term is \emph{not directly measurable}, but the empirical gap in Table~\ref{tab:fine-tuning-ta} serves as a proxy.

\paragraph{Sample efficiency benefit ($\Delta_{\text{sample}}$).}
Standard learning theory predicts faster convergence when capacity gaps are smaller:
\[
\Delta_{\text{sample}} = C_{S,T} B^{-\alpha_{S \leftarrow T}} 
- C_{A,T} B^{-\alpha_{A \leftarrow T}} 
- C_{S,A} n^{-\alpha_{S \leftarrow A}},
\]
where $\alpha_{g \leftarrow f} \in (0,1]$ is the convergence exponent and $C_{g,f}$ are distribution-dependent constants. The key qualitative prediction is that when $n \gg B$, the $n^{-\alpha}$ term vanishes and BRIDGE's ability to leverage all $n$ examples becomes decisive.

\paragraph{Overhead cost ($\Delta_{\text{overhead}}$).}
The two-stage pipeline incurs overhead from (1) generalization error on the Phase~2 dataset ($\Gamma_n$) and (2) noise introduced by the assistant's imperfect imitation of the teacher ($L_\ell \Delta_A$):
\[
\Delta_{\text{overhead}} = \Gamma_n + L_\ell \, \Delta_A.
\]
Both terms decrease with more data and better TA training, respectively.

\paragraph{Quantitative limitations.}
The constants $C_{g,f}$, exponents $\alpha_{g \leftarrow f}$, and noise term $\Delta_A$ are not precisely quantified in our analysis, doing so would require strong parametric assumptions on the model classes and distributions. Our contribution is the \emph{decomposition} itself, which identifies the factors determining success and generates testable predictions (Section~\ref{sec:experiments}).

\subsection{Abundant-Data Regime}

The advantage of BRIDGE becomes most pronounced when data are plentiful. 
As $n \to \infty$ with $B$ fixed, Phase~2 overhead vanishes:
\[
\Gamma_n = O\!\left(\sqrt{\tfrac{\log n}{n}}\right) \to 0, \qquad
C_{S,A} \, n^{-\alpha_{S \leftarrow A}} \to 0.
\]
Meanwhile, direct distillation remains bottlenecked by $B$-it cannot 
leverage abundant data without an intermediate model.

Corollary~\ref{cor:asymptotic-advantage} (Appendix) shows that if:
\begin{enumerate}
    \item the two-stage gap $\Delta_{\text{gap}} \ge 0$ (with empirical evidence suggesting $> 0$),
    \item the assistant learns efficiently from the teacher, and
    \item assistant noise $\Delta_A$ is well-controlled (verifiable via held-out evaluation),
\end{enumerate}
then there exists $n_0$ such that for all $n \ge n_0$:
\[
R(S_{\text{BR}}) < R(S_{\text{MTD}}).
\]

\paragraph{Connection to BRIDGE pipeline.}
This theoretical prediction aligns with our empirical design: (1) coverage-aware selection maximizes utility of the $B$ teacher queries; (2) multi-task training on rationales + answers reduces $\Delta_A$ by providing richer supervision than answer-only training; (3) instruction tuning in Phase~2 ensures the student can faithfully learn from TA outputs, further reducing the Phase~2 overhead term.

\subsection{Empirical Testability}

While our analysis does not provide tight numerical bounds, it generates specific \emph{qualitative predictions} that can be falsified experimentally:

\begin{enumerate}[leftmargin=*]
    \item \textbf{Data scaling}: BRIDGE's advantage should \emph{increase} with the ratio $n/B$. Direct distillation cannot leverage data beyond $B$ examples; BRIDGE can. \emph{Prediction}: Performance gap widens as $n$ grows with $B$ fixed. \emph{Test}: Vary dataset size while holding budget constant (Table~\ref{tab: comprehensive-table}).
    
    \item \textbf{Capacity gap sensitivity}: The two-stage approximation advantage ($\Delta_{\text{gap}}$) should be largest when $P_T/P_S$ is extreme and $P_A$ is intermediate. \emph{Prediction}: BRIDGE gains should diminish for smaller capacity gaps. \emph{Test}: Ablate TA size (Table~\ref{tab:fine-tuning-ta}).
    
    \item \textbf{Selection quality}: Coverage-aware selection should outperform random selection by reducing $\rho(S_B)$. \emph{Prediction}: Selective learning $>$ random sampling at equal budget. \emph{Test}: Ablation in Table~\ref{tab:perf_sel_learning_ablation}.
    
    \item \textbf{Rationale supervision}: Multi-task training should reduce $\Delta_A$ by providing richer supervision. \emph{Prediction}: Rationale + answer training $>$ answer-only training. \emph{Test}: Ablation on loss objectives.
\end{enumerate}

All four predictions are validated in Section~\ref{sec:experiments}. The theory's value is not in providing precise numerical bounds which would require intractable distributional assumptions but in identifying the \emph{mechanisms} through which BRIDGE succeeds and generating falsifiable hypotheses.

\subsection{Summary and Limitations}

Our theoretical analysis provides a \emph{risk decomposition} that identifies the key factors determining when BRIDGE outperforms direct distillation:

\begin{enumerate}
    \item \textbf{Abundant data ($n \gg B$):} BRIDGE leverages the full dataset through the assistant, while direct distillation is limited to $B$ examples. \emph{Testable:} Performance should improve with larger $n/B$ ratio (validated in Section~\ref{sec:experiments}).
    
    \item \textbf{Capacity bridging:} The assistant serves as a ``stepping stone'' that reduces effective approximation error. \emph{Testable:} TA-mediated transfer should outperform direct transfer (validated in Table~\ref{tab:fine-tuning-ta}).
    
    \item \textbf{Controlled assistant noise:} Multi-task training on rationales + answers should reduce $\Delta_A$ compared to answer-only training. \emph{Testable:} Ablation in Table~\ref{tab:perf_sel_learning_ablation}.
    
    \item \textbf{Coverage-aware selection:} For pure diversity ($\alpha=0$), formal 2-approximation guarantees apply. For hybrid selection, we rely on empirical validation.
\end{enumerate}

\paragraph{Limitations.}
We are transparent about what our analysis does \emph{not} provide:
\begin{itemize}
    \item \textbf{Tight bounds:} Constants like $C_{g,f}$, $\Delta_A$, and $\Delta_{\text{gap}}$ are not quantified; bounds are qualitative rather than numerically predictive.
    \item \textbf{Novel coverage results:} The coverage guarantees are adapted from known $k$-center results~\cite{gonzalez1985kcenter,sener2018coreset}; we do not claim theoretical novelty for the selection component.
    \item \textbf{Hybrid selection theory:} The difficulty-weighted component lacks rigorous guarantees; its benefit is established empirically, not theoretically.
\end{itemize}

The value of our analysis lies in (1) decomposing BRIDGE's advantage into interpretable factors, (2) generating qualitative predictions validated by experiments, and (3) clarifying which components have formal backing versus empirical justification.

\section{Evaluation Methodology}
\label{sec:eval_methodology}
To evaluate the effectiveness of \tool, we address three research questions spanning overall performance, component-level analysis, and budget sensitivity. Our evaluation focuses on black-box knowledge transfer from an expert teacher to tiny student models, optionally mediated by an intermediate teaching assistant (TA).

\begin{itemize}
    \item \textbf{RQ1: Performance Comparison.}
    How does \tool compare against black-box and white-box distillation baselines when transferring knowledge from black-box teachers to tiny student models?

    \item \textbf{RQ2: Intrinsic Analysis.}
    How do key components (data selection, instruction tuning) and design choices (TA and expert selection) affect knowledge-transfer effectiveness?

    \item \textbf{RQ3: Sensitivity Analysis.}
    How do budget constraints influence TA and student performance?
\end{itemize}

\subsection{Datasets}
We evaluate our method across three distinct domains:  medical, legal, and financial to test generalization across specialized vocabularies and reasoning formats. Table~\ref{tab:dataset_characteristics} summarizes the key statistics.

\begin{itemize}
    \item \textbf{MedConceptsQA (Medical)}~\cite{shoham2024medconceptsqa}:
    A large-scale benchmark designed to test diagnostic reasoning and concept differentiation. It comprises over 800{,}000 question-answer pairs covering specialized terminologies (e.g., ICD-9/10 and ATC drug codes).

    \item \textbf{MMLU-Law (Legal)}~\cite{hendrycks2020mmlu}:
    The Law subset of the MMLU benchmark, focusing on complex legal reasoning. Tasks include interpretation of statutes, case-law analysis, and contract logic.

    \item \textbf{FOMC (Financial)}~\cite{tang2025finmteb}:
    A dataset that classifies central bank communications into three stances: \textit{hawkish}, \textit{dovish}, or \textit{neutral}. The statements are derived from publicly available FOMC meeting minutes, press releases, and transcripts.
\end{itemize}

\begin{table}
    \centering
    \caption{Dataset statistics}
    \label{tab:dataset_characteristics}
    \begin{adjustbox}{max width=\columnwidth}
    \begin{tabular}{@{} l c c c c @{}}
    \hline
    \textbf{Dataset} & \textbf{Domain} & \textbf{Examples} & \textbf{Task Type} & \textbf{Metric} \\
    \hline
    MedConceptsQA & Medical   & 800K+ & MCQ (4-way) & Accuracy \\
    MMLU-Law      & Legal     & 1.5K+ & MCQ (4-way) & Accuracy \\
    FOMC          & Financial & 1K+   & Classification (3-way) & Macro F1 / Acc \\
    \hline
    \end{tabular}
    \end{adjustbox}
\end{table}

\subsection{Experimental Procedure}

\noindent\textbf{RQ1: Performance Comparison.}
We compare \tool against baselines grouped into two categories. All methods are evaluated under comparable access constraints whenever possible.

\begin{itemize}
    \item \textbf{Black-box methods:}
    \begin{itemize}
        \item \textbf{BLKD (Standard Black-box Distillation):}
        Direct fine-tuning on GPT-4-generated labels and rationales, without an intermediate TA.

        \item \textbf{Lion}~\cite{lion}:
        Adversarial distillation using iterative imitation-discrimination generation loops with progressively harder examples.
    \end{itemize}

    \item \textbf{White-box method:}
    \textbf{ULD (Universal Logits Distillation)}~\cite{ULD} uses projection operators to align teacher-student vocabularies, enabling distillation across different tokenizers. Since this approach requires internal access (e.g., logits), we run ULD only for distillation from the TA to the student models.
\end{itemize}

\noindent\textbf{Baseline Scope.}
We restrict comparison to methods operating under identical access constraints (black-box APIs returning text only). Recent approaches such as MiniLLM~\cite{gu2024minillm} and GKD~\cite{agarwal2024gkd} require white-box access to teacher logits or gradients, rendering them inapplicable to our setting. Similarly, methods like PAD~\cite{zhu2024pad} target mid-sized students (3-7B), where capacity gaps are less severe; adapting them to sub-1B models would require substantial modification. Our baselines therefore represent the most directly comparable alternatives under the combined constraints of black-box access, extreme capacity gaps ($>$1000$\times$), and strict query budgets ($<$10\%).

For \tool, we execute the complete two-phase pipeline. Phase~1 trains the TA via selective learning on curated data under a $B\%$ query budget, and Phase~2 applies instruction tuning followed by intensive TA-supervised learning on the full dataset.

In our experiments, we employ GPT-4 as an expert teacher; two mid-sized models (Qwen2.5-7B-Instruct and Mistral-7B-Instruct-v0.3) as TAs for \tool; and three tiny ($<$1B parameters) models (BLOOMZ-560M, OPT-350M, and Pythia-410M) as students.

\noindent\textbf{Implementation Details.}
To ensure reproducibility, we specify all hyperparameters below:

\textit{Selection Pipeline.}
We use a Gaussian Mixture Model (GMM) with $C=16$ components and diagonal covariance for semantic clustering. The GMM is initialized with $k$-means++ centers, trained with 3 EM restarts, and a maximum of 1,000 iterations. Within each cluster, we select $m_0=3$ seed examples via farthest-point sampling for diversity scoring. The difficulty-diversity trade-off parameter is set to $\alpha=0.5$ throughout all experiments. The warm-up stage uses 10\% of the dataset.

\textit{Instruction Tuning.}
For the instruction alignment step in Phase~2, we use \texttt{databricks-dolly-15k}~\cite{DatabsricksDolly}, a publicly available instruction-following dataset containing 15,000 high-quality prompt-response pairs across diverse tasks (brainstorming, classification, QA, summarization, etc.). We fine-tune students on the full dataset for 5 epochs using LoRA~\cite{hu2022lora} (rank $r=16$, $\alpha=32$, dropout 0.05) to reduce memory overhead.

\textit{Training Configuration.}
For both TA apprenticeship (Phase~1) and student distillation (Phase~2), we use the AdamW optimizer with learning rate $2\times10^{-5}$, linear warmup over 10\% of steps, and cosine decay. Batch size is 8 for TAs and 16 for students. The multi-task loss weight is $\beta=0.5$, equally balancing rationale and answer supervision. All models are trained in bfloat16 precision on NVIDIA A100 GPUs. TA training takes approximately 10 hours per dataset; student training takes 2-4 hours depending on model size.
For detailed implementation and other hyper-parameters, one can see our website~\cite{website}.

\noindent\textbf{RQ2: Intrinsic Analysis.}
To quantify the contributions of specific components and design choices, we conduct controlled studies where we modify one factor at a time while keeping the remaining settings fixed. This isolates the causal impact of each component on downstream performance.

\begin{itemize}

    \item \textbf{TA Apprenticeship Efficacy:} We interrogate the necessity of the Phase 1 apprenticeship by comparing \tool against a Naive Fine-Tuning baseline. This setup eliminates the expert teacher entirely, fine-tuning the TA directly on the full dataset using ground-truth labels to determine if the value lies in the rationales or merely in the data coverage.

    \item \textbf{Mechanism Ablation:}  We isolate the effects of our two core methodological contributions: (i) \textit{Selection Strategy:} Replacing the coverage-guided pipeline (difficulty + diversity) with uniform random sampling under the same budget $B$. (ii) \textit{Instruction Alignment:} Removing the Phase 2 instruction tuning step to quantify the impact of ``instruction blindness'' on distillation fidelity.
    
    \item \textbf{TA Selection:} We systematically vary the capacity of the intermediate bridge using the Qwen2.5 family (3B, 7B, 14B, 32B) to identify the optimal balance between transfer fidelity and computational overhead.

    \item \textbf{Teacher Quality:}
    We evaluate three expert teachers (GPT-4, Gemini-2.5, and DeepSeek-V3) with a fixed TA to assess how expert quality affects knowledge transfer to the TA, and consequently, to students.
\end{itemize}

\noindent\textbf{RQ3: Sensitivity Analysis.}
We assess the framework's and scalability and stability by controlling two key variables:

\begin{itemize}
    \item \textbf{Student Scalability:}  We fix a TA and vary the student size up to 1B parameters to study scaling behavior within the ``tiny model'' regime and to provide an approximate upper bound for this setting.

    \item \textbf{Data Budget: } We analyze robustness by varying the Phase~1 query budget $B$ across four ranges: Low (5\%), Default (10\%), Moderate (20\%), and High (50-70\%). We report the resulting trade-off between API cost and downstream performance for both the TA and the student. Additionally, we fix the TA and teacher configurations and scale the student model up to 1B parameters (testing Llama-3.2-1B) to determine if the framework's benefits extend beyond the tiny regime and to test its efficacy on higher-capacity architectures.
\end{itemize}

\subsection{Evaluation Metrics}
Evaluating generative models on fixed-choice tasks requires robust parsing of free-form outputs~\cite{wang2024myanswercfirsttoken}. To ensure consistent and fair evaluation, we adopt an LLM-as-a-judge approach to extract each model's selected answer (and, when present, its rationale) from the decoded response text. We employ GPT-5~\cite{openai2025_gpt5_system_card} to normalize student outputs into discrete labels for scoring. We then compute downstream task performance on held-out test sets for each benchmark:

\begin{itemize}
    \item \textbf{MedConceptsQA \& MMLU-Law.}
    Top-1 multiple-choice accuracy, defined as the fraction of test examples where the predicted answer matches the ground-truth label:
    \begin{equation}
        \text{Accuracy} = \frac{1}{N}\sum_{i=1}^{N} \mathbf{1}\!\left(\hat{y}_i = y_i\right).
    \end{equation}

    \item \textbf{FOMC.}
    Macro F1 score and accuracy for three-way classification. Macro F1 is computed as the unweighted average of per-class F1 scores:
    \begin{align}
        \text{MacroF1} &= \frac{1}{K} \sum_{c=1}^{K} \text{F1}_c, \\
        \text{F1}_c &= \frac{2\,\text{Precision}_c \cdot \text{Recall}_c}{\text{Precision}_c + \text{Recall}_c},
    \end{align}
    where
    \begin{align}
        \text{Precision}_c &= \frac{TP_c}{TP_c + FP_c}, \\
        \text{Recall}_c &= \frac{TP_c}{TP_c + FN_c}.
    \end{align}
\end{itemize}

\section{Experimental Results}
\label{sec:experiments}

\subsection{Performance Comparison}

\begin{table*}\centering
\caption{Performance Comparison of \tool across all baselines}
\label{tab: comprehensive-table}
\resizebox{\textwidth}{!} {
\begin{tabular}{@{}lllllccc c@{}}\toprule
\textbf{Category} & \textbf{Approach} & \textbf{Teacher} & \textbf{TA} & \textbf{Student} &\textbf{MedConceptsQA} &\textbf{ MMLU Prof Law} &\textbf{FOMC } &\textbf{Average}   \\\cmidrule{1-9}
\multirow{6}{*}{\textbf{Original}} &\multirow{6}{*}{\textbf{None}} &\multirow{6}{*}{None} &\multirow{6}{*}{None} &GPT-4 &94.27 &87.70 & 71.00  & 84.32 \\\cmidrule{5-9}
& & & &Qwen2.5-7B-Instr. &58.00 &51.96 &55.33 & 55.10 \\\cmidrule{5-9}
& & & &Mistral-7B-Instr.-v0.3 &36.40 &44.98 &45.59 & 42.32 \\\cmidrule{5-9}
& & & & Bloomz-560M & 24.60 & 25.68 & 22.19 & 24.16\\\cmidrule{5-9}
& & & & OPT-350M& 23.90& 24.9& 22.85 & 23.88\\\cmidrule{5-9}
& & & & Pythia-410M& 25.43& 25.10& 30.13 & 26.89\\\cmidrule{1-9}
\multirow[c]{6}{*}{\textbf{Whitebox }} &\multirow[c]{6}{*}{\textbf{ULD}} &\multirow[c]{6}{*}{None} &\multirow[c]{3}{*}{Qwen2.5-7B-Instr.} & Bloomz-560M& 25.80& 24.92& 23.50 & 24.74 \\\cmidrule{5-9}
& & & &OPT-350M&24.90&25.55& 23.80 & 24.75\\\cmidrule{5-9}
& & & &Pythia-410M&26.50&25.65& 31.80 & 27.98\\\cmidrule{4-9}
& & &\multirow[c]{3}{*}{Mistral-7B-Instr.-v0.3} &Bloomz-560M&23.50&27.60& 23.30 & 24.80 \\\cmidrule{5-9}
& & & &OPT-350M&24.00&26.40& 24.30 & 24.90 \\\cmidrule{5-9}
& & & &Pythia-410M&25.40&25.50& 30.90 & 27.26\\\cmidrule{1-9}
\multirow{14}{*}{\textbf{Blackbox}} &\multirow{3}{*}{\textbf{BLKD}} &\multirow{3}{*}{GPT-4} &\multirow{3}{*}{None} &Bloomz-560M&27.00&27.70&28.93 & 27.87\\\cmidrule{5-9}
& & & &OPT-350M&26.20&27.70&26.40 & 26.77\\\cmidrule{5-9}
& & & &Pythia-410M&28.00&24.60&26.65 & 26.42\\\cmidrule{2-9}
&\multirow{3}{*}{\textbf{Lion}} &\multirow{3}{*}{GPT-4} &\multirow{3}{*}{None} &Bloomz-560M&25.00&24.30&23.40 &24.20\\\cmidrule{5-9}
& & & &OPT-350M&19.40&20.50&23.40 & 21.10\\\cmidrule{5-9}
& & & &Pythia-410M&20.60&23.20&18.50 & 20.77\\\cmidrule{2-9}
&\multirow{8}{*}{\textbf{\tool}} &\multirow{8}{*}{GPT-4} &\multirow{2}{*}{None} &Qwen2.5-7B-Instr. &86.12 &65.78 &70.80 & 74.22 \\\cmidrule{5-9}
& & & &Mistral-7B-Instr.-v0.3 &56.66 &56.66 &60.50 &57.94 \\\cmidrule{4-9}
& & &\multirow{3}{*}{Qwen2.5-7B-Instr.} &\textbf{Bloomz-560M}&\textbf{31.20}&\textbf{33.59}&\textbf{33.19}&\textbf{32.66} \\\cmidrule{5-9}
& & & &\textbf{OPT-350M}&\textbf{33.20}&\textbf{33.75}&\textbf{34.50}&\textbf{33.82}\\\cmidrule{5-9}
& & & &\textbf{Pythia-410M}&\textbf{35.60}&\textbf{28.11}&\textbf{35.60}&\textbf{33.10}\\\cmidrule{4-9}
& & &\multirow{3}{*}{Mistral-7B-Instr.-v0.3} &\textbf{Bloomz-560M}&\textbf{31.20}&\textbf{30.81}&\textbf{32.90}&\textbf{31.64}\\\cmidrule{5-9}
& & & &\textbf{OPT-350M}&\textbf{30.80}&\textbf{31.22}&\textbf{31.40}&\textbf{31.14}\\\cmidrule{5-9}
& & & &\textbf{Pythia-410M}&\textbf{31.20}&\textbf{34.22}&\textbf{36.50}&\textbf{33.97}\\
\bottomrule
\end{tabular}
}
\end{table*}

Table~\ref{tab: comprehensive-table} shows the performance of \tool compared to representative black-box (BLKD, Lion) and white-box (ULD) methods across three benchmarks: MedConceptsQA, MMLU-Law, and FOMC. Overall, \tool consistently outperforms all baselines, achieving substantial improvements over both black-box approaches and white-box distillation.

Specifically, \tool enables tiny models to transcend their capacity limitations, delivering relative performance improvements of 23-41\% over their pre-distillation baselines. This suggests that the framework successfully internalizes key teacher capabilities despite the extreme parameter disparity.

Beyond metric gains, \tool also yields qualitative improvements in student outputs, producing more coherent responses aligned with teacher-style reasoning. Figure~\ref{fig:examples} provides a representative example showing how \tool transforms a tiny student model (BLOOMZ-560M) from producing ambiguous, nonsensical outputs into generating correct answers with grounded reasoning on a financial sentiment classification task.

\begin{figure}
    \centering
    \includegraphics[width=1.0\linewidth]{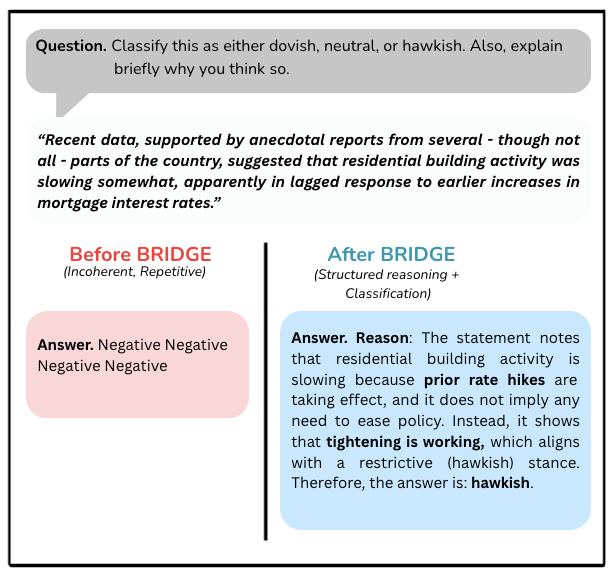}
    \caption{A representative example from the FOMC benchmark showing the BLOOMZ-560M student's output before (hallucination) and after (grounded reasoning) \tool distillation.}
    \label{fig:examples}
\end{figure}

\textbf{Comparison against Black-box methods.}
As shown in Table~\ref{tab: comprehensive-table}, \tool consistently surpasses the direct distillation baseline (BLKD) across all domains, with average approximate gains on students of about 17-26\% over pure black box distillation. 
Notably, \tool strongly outperforms Lion across all benchmarks, with average student gains of roughly 35-59\%. 
In contrast to \tool, Lion's adversarial data generation even hurts performance in some cases (Pythia-410M), indicating that adversarial training without selective learning or intermediate bridging capacity can be harmful for tiny models.

To rigorously determine whether Lion's failure stems from the capacity gap or its methodology, we conducted a controlled experiment applying Lion with Intermediate TA. In this configuration, we inject our 7B Teacher Assistant into the Lion pipeline, mirroring \tool's two-phase architecture but retaining Lion's adversarial logic.

\begin{table}
\centering
\small
\caption{Extra comparison for Lion with an intermediate (Qwen2.5-7B-Instr. TA) model on FOMC}
\label{tab:lion_intermediate}
\begin{tabular}{@{} l 
                 c c c c  @{}}
\toprule
\textbf{Student}
  & \multicolumn{1}{c}{\textbf{Baseline }}
  & \multicolumn{1}{c}{\textbf{Lion}}
  & \multicolumn{1}{c}{\textbf{Lion + TA}}
  & \multicolumn{1}{c}{\textbf{\tool}} \\
\midrule
 BLOOMZ-560M    & 22.2  & 23.4 &  23.3 & 33.2\\
 OPT-350M       & 22.9 & 23.4 & 19.3  & 34.5 \\
 Pythia-410M    & 30.1 & 18.5 & 26.5 & 35.6 \\
\midrule
\textbf{Average} & 25.1 & 21.8 & 23.1 & 34.4 \\
\bottomrule
\end{tabular}
\end{table}

As detailed in Table~\ref{tab:lion_intermediate}, incorporating an intermediate TA with Lion does improve the student models by 5.97\% on average. However, despite this structural enhancement, Lion remains unstable at the student level. For instance, the performance of OPT-350M degrades by 17\%. In contrast, \tool delivers consistent improvements across all three students. Furthermore, Lion with TA is still dramatically underperformed compared to \tool, trailing by an average of 32.9\% across students and 16.58\% at the TA level (Table~\ref{tab:lion_intermediate}). 
Since both methods utilize the same intermediate capacity (TA), the performance delta is attributable strictly to methodology. \tool's combination of \textit{selective sampling} and \textit{instruction-first curriculum} proves fundamentally superior to unstructured adversarial generation for the training of tiny models.


\textbf{Comparison against White-box methods.}
As seen in Table~\ref{tab: comprehensive-table}, \tool outperforms ULD~\cite{ULD} across all benchmarks, with average approximate student gains of about 18-33\% relatively.
These results challenge the prevailing assumption that access to internal logits inherently yields better distillation than text-only signals. 
We attribute this to two key factors:
\begin{itemize}
    \item \textbf{Capacity-aware transfer.}
    ULD forces projection of high-dimensional logits onto a severely constrained student. Under $>$1000$\times$ capacity gaps, this projection is inherently lossy. In contrast, \tool's two-phase architecture decomposes the gap through an intermediate TA, while selective learning ensures pedagogically optimal example ordering.

    \item \textbf{Process vs.\ distribution.}
    \tool transfers \textit{reasoning processes} via rationale supervision and ensures tiny students can parse them through instruction alignment. ULD transfers only output distributions, assuming students can already interpret task semantics---an assumption that fails for sub-billion-parameter models.
\end{itemize}

These substantial gaps between \tool and BLKD; \tool and Lion; or \tool and ULD demonstrate that \tool's structured two-phase approach with intermediate capacity bridging is fundamentally superior to direct fine-tuning, adversarial distillation, or cross-tokenizer whitebox distillation for tiny student models.

To verify that our gains are not an artifact of specific model pairing (Qwen $\to$ Student), we replicated the pipeline using Mistral-7B-Instruct-v0.3 as the TA. Our results confirm that the magnitude of improvement remains consistent, validating that the framework is agnostic to the specific intermediate architecture.

\begin{tcolorbox}[ 
    enhanced,
    unbreakable, 
    title={Answer to RQ1}, 
    colback=white, colframe=gray!50!black, boxrule=0.6pt, rounded corners, coltitle=white, fonttitle=\bfseries, boxed title style={ colback=gray!70!black, rounded corners, boxrule=0pt, left=4pt, right=4pt, top=2pt, bottom=2pt } ] 

\tool consistently outperforms both black-box and white-box baselines, achieving gains of up to 41\%. Our analysis confirms that for tiny student models, structured rationale-based supervision is significantly more effective than adversarial hardness or raw logit matching. The framework successfully bridges the extreme capacity gap, enabling sub-1B parameter models to internalize reasoning patterns previously accessible only to large experts. 

\end{tcolorbox}

\subsection{{Intrinsic Analysis}}

To understand the driving factors of \tool's performance, we conduct a series of ablation to isolate the contribution of each key component in \tool's design, ranging from the distillation mechanisms to the choice of teacher, TA and student architectures.

\subsubsection{TA Apprenticeship Efficacy}

\begin{table}
\centering
\small
\caption{Performance comparison between Naive TA Fine-Tuning (using 100\% ground-truth labels) and \tool's Apprenticeship (using 10\% teacher rationales)}
\label{tab:fine-tuning-ta}
\begin{tabular}{@{} l l c c c @{}}
\toprule
 & \textbf{Model Name} & \multicolumn{1}{c}{\textbf{Before }} & \multicolumn{1}{c}{\textbf{Naive FT}}  & \multicolumn{1}{c}{\textbf{BRIDGE }}  \\
\midrule
Teacher & GPT-4 & 87.70 & 87.70 & 87.70  \\
\midrule
TA & Qwen2.5-3B & 43.22 & 44.10 & 54.38 \\
\cmidrule{2-5}
\multirow{3}{*}{Student} 
 & BLOOMZ-560M & 25.68 & 26.10 & 29.33  \\
 & OPT-350M & 24.97 & 25.02 & 28.75 \\
 & Pythia-410M & 25.1 & 25.90 & 27.36 \\
\midrule
TA & Qwen2.5-7B-Instr. & 51.96 & 53.19 & 65.78 \\
\cmidrule{2-5}
\multirow{3}{*}{Student} 
 & BLOOMZ-560M & 25.68 & 26.39 & 33.59 \\
 & OPT-350M & 24.97 & 25.95 & 33.75 \\
 & Pythia-410M & 25.10 & 26.35 & 28.11 \\
\midrule
TA & Qwen-14B & 59.74 & 60.32 & 73.15 \\
\cmidrule{2-5}
\multirow{3}{*}{Student} 
 & BLOOMZ-560M & 25.68 & 27.54 & 38.29  \\
 & OPT-350M & 24.97 & 23.48 & 36.71 \\
 & Pythia-410M & 25.10 & 24.53 & 29.64 \\

\bottomrule
\end{tabular}
\end{table}

To validate the necessity of the Phase 1 (Budget-Aware TA Apprenticeship), we interrogate a critical counter-factual: could the TA serve as an effective bridge simply by being fine-tuned on the full dataset using ground-truth labels? This ``Naive Fine-Tuning'' baseline represents a zero-cost alternative that eliminates the expert teacher entirely while preserving the two-stage architecture.
In this experiment, we fine-tuned Qwen2.5 TAs of varying capacities (3B, 7B, 14B) on the full MMLU-Law training set using standard supervised learning. 
We then apply the same knowledge-transfer procedure from the fine-tuned TA to the student models using a multi-task objective. 

Table~\ref{tab:fine-tuning-ta} shows that naive fine-tuning improves the performance of TAs by a marginal gain less than 2.3\% across model sizes. Consequently, the downstream signal is weak: students' performance gains are  less than 4\%. Especially, under 14B TA, the performance of OPT-350M and Pythia-410M is even degraded. 
In contrast, \tool with Budget-Aware TA Apprenticeship delivers substantially larger gains: the same 7B TA improves by 26.6\% and yields student improvements of 12-35\%. 
\tool outperforms the baseline using only small amount of the data (e.g., 10\%), proving that expert-derived Chain-of-Thought rationales provide a far richer learning signal than 100\% coverage of ground-truth labels. 
Furthermore, the failure of the larger 14B TA to improve under direct fine-tuning isolates the bottleneck as supervision quality rather than model capacity. 
Overall, these results confirms that the Teacher's value lies not in providing correct answers, but in demonstrating the deductive processes necessary for effective downstream distillation.

\subsubsection{{Mechanism Ablation.}}
We first evaluate the contributions of our two core phases: Coverage-guided Selection in Phase 1 and Instruction Alignment in Phase 2.

\textbf{Impact of Coverage-guided Selection.}
We first test the hypothesis that \textit{data quality outweighs quantity} under strict budget constraints. We replace our coverage-guided selection pipeline (difficulty + diversity) with uniform random sampling while maintaining the exact same query budget ($B$).
As seen in Table \ref{tab:perf_sel_learning_ablation}, removing the Coverage-guided Selection pipeline and replacing it by random sampling degrades students' performance by 16.66\% on average. 
Especially, Pythia-410M's performance suffers a significant decline of about 34\%, from 35.60\% to 23.50\%. 
These results confirms that within the capacity-budget trap, the difficulty and diversity (information density) are essential of transfer success. Random sampling fails to capture the frontier examples necessary to stretch the TAs' capacities, leading to a weaker supervisor and consequently, a collapsed student.


\begin{table}
\centering
\small
\caption{Performance of \tool (Qwen2.5-7B-Instr. as TA and GPT-4 as Teacher) on MedConceptsQA: without vs with Coverage-guided Selection (CS)}
\label{tab:perf_sel_learning_ablation}
\begin{tabular}{@{} l l c c @{}}
\toprule
\textbf{Configuration} & \textbf{Student Model} & \textbf{Baseline} & \textbf{\tool} \\
\midrule
\multirow{3}{*}{Without CS}
  & BLOOMZ-560M & 24.60 & 30.90 \\
  & OPT-350M    & 23.90 & 29.00 \\
  & Pythia-410M & 25.43 & 23.50 \\
\midrule
\multirow{3}{*}{With CS}
  & BLOOMZ-560M & 24.60 & \textbf{31.20} \\
  & OPT-350M    & 23.90 & \textbf{33.20} \\
  & Pythia-410M & 25.43 & \textbf{35.60} \\
\bottomrule
\end{tabular}
\end{table}

\textbf{Impact of Instruction Alignment.}
We also evaluate the impact of Instruction Alignment. We compare the full \tool's pipeline against its variant where the tiny student is exposed directly to the TA's rationale-heavy supervision without instruction alignment. 
As shown in Table~\ref{tab:instruct_tuning_ablation_mmlu}, Instruction Alignment yields substantially larger gains for all students, providing on average an improvement by 22.85\%  over the variant without instruction alignment. 
This confirms our premise regarding the instruction blindness of sub-billion parameter models: without the behavioral prerequisite of instruction alignment, the student lacks the syntactic competence to parse and internalize the complex Chain-of-thought supervision provided by the TA, rendering the distillation signal ineffective.

\begin{table}
\centering
\small
\caption{Performance on MedConceptsQA: without vs with Instruction Alignment (IA)}
\label{tab:instruct_tuning_ablation_mmlu}
\begin{adjustbox}{max width=\columnwidth}
\begin{tabular}{@{} l l c c @{}}
\toprule
\textbf{Setting} & \textbf{Model Name}
  & \multicolumn{1}{c}{\textbf{Baseline}} 
  & \multicolumn{1}{c}{\textbf{\tool}} \\
\midrule
\multirow{3}{*}{Without IA} 
 & BLOOMZ-560M & 24.60 & 26.70 \\
 & OPT-350M    & 23.90 & 28.70 \\
 & Pythia-410M & 25.43 & 26.00 \\
\midrule
\multirow{3}{*}{With IA} 
 & BLOOMZ-560M & 24.60 & \textbf{33.20} \\
 & OPT-350M    & 23.90 & \textbf{35.60} \\
 & Pythia-410M & 25.43 & \textbf{31.20} \\
\bottomrule
\end{tabular}
\end{adjustbox}
\end{table}

\subsubsection{TA Capacity Scaling Analysis}

\begin{figure}
\centering
\includegraphics[width=1\linewidth]{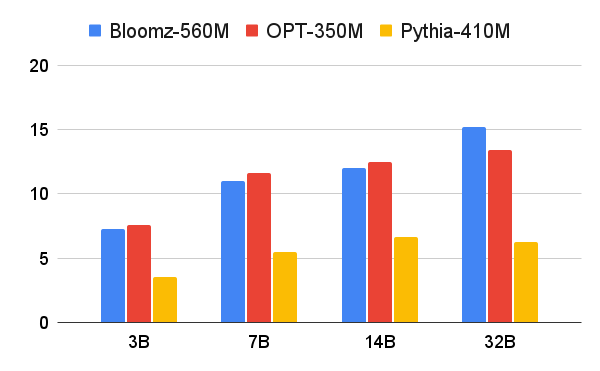}
\caption{Performance of \tool with Qwen2.5 series models in different sizes as the TA on FOMC (GPT-4 as the Teacher)}
\label{fig:ta_analysis}
\end{figure}

To determine the optimal scale for the intermediate bridge, we evaluate the impact of Teacher Assistant capacity on downstream transfer fidelity. We systematically vary the Qwen2.5 TA size across four orders of magnitude (3B, 7B, 14B, 32B) on the FOMC benchmark.

In Figure~\ref{fig:ta_analysis}, there is a strong positive correlation between TA capacity and student efficacy. Upgrading the intermediate model from the smallest variant (3B) to the mid-sized variant (14B) drives a near two-fold increase in student performance, confirming that a more capable bridge generally constructs a higher-fidelity learning signal.
The transition from 3B to 7B yields the steepest marginal improvement (6-13\%), followed by moderate gains at 14B (9-16\%). However, we observe a saturation point beyond this threshold. The 32B TA gives very large gains to BLOOMZ-560M and OPT-350M, while Pythia-410M suffers a performance regression compared to the 14B baseline. 
These results could suggest that the 7B parameter TA offers a good balance, maximizing transfer fidelity without incurring the diminishing returns of larger intermediate models.

\subsubsection{{Teacher Quality}}
To verify that \tool's efficacy is not an artifact of a specific teacher's latent distribution, we evaluate the framework's robustness across three distinct frontier models: GPT-4, Gemini-1.5-Pro, and DeepSeek-V3,

The results in Table \ref{tab:expert_selection_mmlu_law} show a stable downstream trend, confirming that \tool's improvements are not tied to a specific teacher. 
GPT-4 yields slightly higher absolute performance, but Gemini-2.5 achieves similar student improvements (about 12-33\%), implying robustness of the approach.
Interestingly, DeepSeek-V3 shows lower TA improvement (+4.13\%) but maintains strong student gains (13-30\%), suggesting factors beyond raw expert performance (such as reasoning style) influence knowledge transfer quality.

\begin{table*}
\centering
\small
\caption{Performance of \tool (Qwen2.5-7B-Instr. as the TA) by different Teacher on MMLU LAW}
\label{tab:expert_selection_mmlu_law}
\begin{tabular}{@{} l l l c c @{}}
\toprule
\textbf{Teacher} &  & \textbf{Model Name} & \multicolumn{1}{c}{\textbf{Baseline}} & \multicolumn{1}{c}{\textbf{\tool}} \\
\midrule
\multirow{5}{*}{GPT-4}
 & Teacher & GPT-4 & 87.70 & 87.70  \\
 & TA & Qwen2.5-7B-Instr. & 58.00 & 65.78 \\
 
\cmidrule{2-5}
 & \multirow{3}{*}{Student} & BLOOMZ-560M & 25.68 & 33.59 \\
 & & OPT-350M & 24.97 & 33.75 \\
 & & Pythia-410M & 25.10 & 28.11 \\
\midrule
\multirow{5}{*}{Gemini-2.5}
 & Teacher & Gemini-2.5 & 88.00 & 88.00 \\
 & TA & Qwen2.5-7B-Instr. & 58.00 & 63.20  \\
 
\cmidrule{2-5}
 & \multirow{3}{*}{Student} & BLOOMZ-560M & 25.68 & 33.20  \\
 & & OPT-350M & 24.97 & 33.40  \\
 & & Pythia-410M & 25.10 & 28.60  \\
\midrule
\multirow{5}{*}{DeepSeek-V3}
 & Teacher & DeepSeek-V3 & 88.50 & 88.50  \\
 & TA & Qwen2.5-7B-Instr. & 58.00 & 60.40 \\
 
\cmidrule{2-5}
 & \multirow{3}{*}{Student} & BLOOMZ-560M & 25.68 & 32.50  \\
 & & OPT-350M & 24.97 & 32.70  \\
 & & Pythia-410M & 25.10 & 28.50  \\
\bottomrule
\end{tabular}
\end{table*}

\begin{tcolorbox}[ 
    enhanced,
    unbreakable, 
    title={Answer to RQ2}, 
    colback=white, colframe=gray!50!black, boxrule=0.6pt, rounded corners, coltitle=white, fonttitle=\bfseries, boxed title style={ colback=gray!70!black, rounded corners, boxrule=0pt, left=4pt, right=4pt, top=2pt, bottom=2pt } ]
    
We demonstrate that Budget-Aware TA Apprenticeship is far superior to direct fine-tuning, proving that reasoning transfer is the primary driver of success. 
Additionally, our intrinsic analysis confirms that Coverage-guided Selection and Instruction Alignment are structural prerequisites for distilling knowledge to sub-billion parameter models. 
We identify that a 7B parameter TA offers the optimal balance for distilling to tiny models, as larger TAs yield diminishing returns. 
\end{tcolorbox}

\subsection{Sensitivity Analysis}

\subsubsection{Impact of Student Scalability}

\begin{figure}
\centering
\includegraphics[width=1\linewidth]{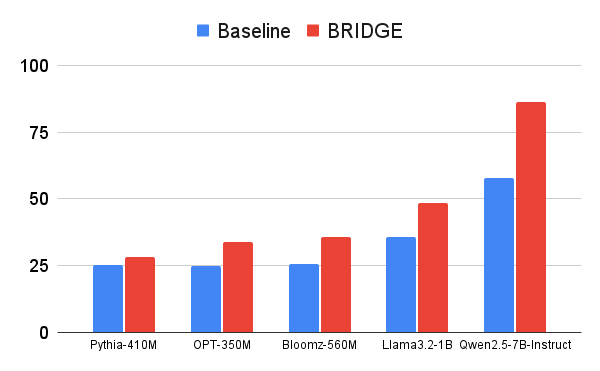}
\caption{Performance of \tool (Qwen2.5-7B-Instr. as the TA and GPT-4 as the Teacher) with the students in various size on MMLU Prof Law}
\label{fig:student_change}
\end{figure}

To assess whether \tool's benefits are confined to the models ($\le$500M parameters) or extend to higher-capacity students, we evaluate the framework on Llama-3.2-1B. We hypothesize that a student with greater representational capacity can more effectively internalize the dense supervision provided by the TA, leading to superior knowledge retention.

The results in Figure \ref{fig:student_change} strongly confirm our hypothesis.
Llama-3.2-1B student achieves a significant performance gain of 35.17\% in performance, outperforming the three smaller student models in relative improvements.
This scaling behavior confirms that \tool is not merely a patch for low-capacity models but a robust distillation mechanism that scales efficiently with student capacity.

\subsubsection{Impact of Data Budget}

To quantify the sensitivity of \tool to resource constraints, we systematically vary the Phase 1 query budget $B$ across five distinct regimes, ranging from extremely scarce (5\%) to abundant (70\%). This analysis aims to delineate the trade-off between supervision volume and downstream performance, identifying the minimal resource threshold required for effective distillation.

%
%
As shown in Figure~\ref{fig:budget-perf}, even at only 5\% data, \tool already surpasses BLKD at 100\% of the dataset (30.33\% vs.\ 27.02\% student on average Table~\ref{tab: comprehensive-table}). This finding underscores the high quality of our selective sampling mechanism, which successfully identifies the most pedagogical examples. Consequently, the 10\% budget setting offers a strong practical trade-off. At this level, TA accuracy rises to 86.12\% and student improvement yields an average by 35.25\%. These trends confirm \tool's robustness under tight data budgets, a key requirement for real-world deployment

\begin{figure}
\centering
\includegraphics[width=1\linewidth]{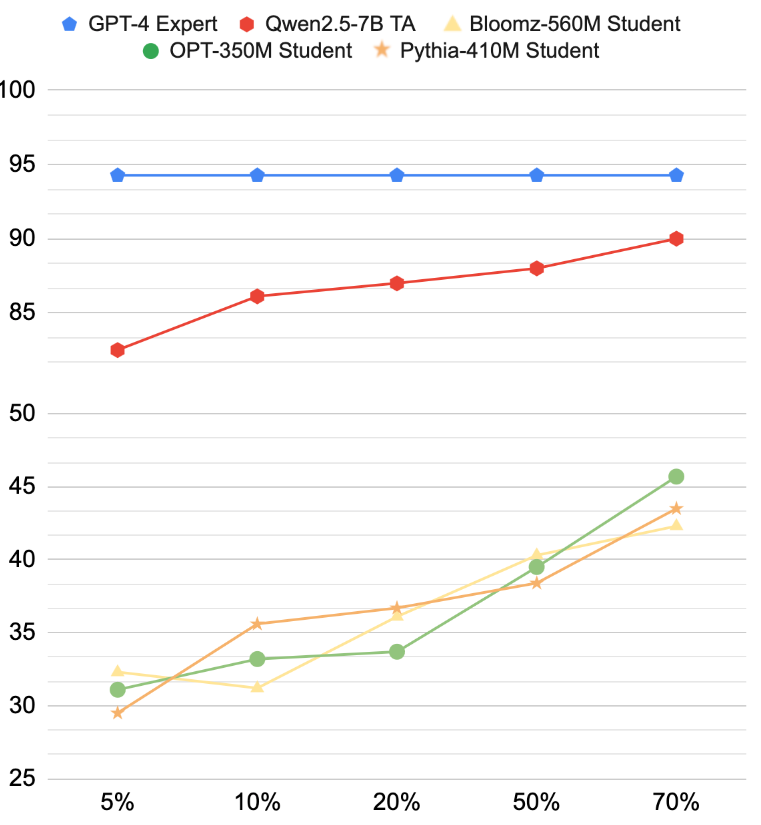}
\caption{Performance of BRIDGE by data budgets on MedconceptsQA}
\label{fig:budget-perf}
\end{figure}

The scaling behavior reveals a crucial decoupling between the TA and the students. The performance jumps significantly from 5\% to 10\% (reaching $\sim$86\%), but subsequently plateaus, requiring massive data increases (20\% $\to$ 70\%) to increase 4 points of accuracy.
Conversely, students' performance accelerates as data volume increases. The steepest growth often occurs between 50\% and 70\%, suggesting that tiny models benefit disproportionately as the TA approaches expert-level parity ($\ge$90\%).

Additionally, while 10\% could remain the efficiency optimal, capturing the steepest portion of the TA's learning curve, the results indicate that higher budgets are not wasted. Thus, for applications demanding maximum student performance, a budget of 50-70\% is justifiable, whereas for cost-sensitive deployments, the 10\% setting offers a good balance.

\begin{tcolorbox}[
    enhanced, 
    unbreakable, 
    title={Answer to RQ3}, 
    colback=white, colframe=gray!50!black, boxrule=0.6pt, rounded corners, coltitle=white, fonttitle=\bfseries, boxed title style={ colback=gray!70!black, rounded corners, boxrule=0pt, left=4pt, right=4pt, top=2pt, bottom=2pt } ] 
\tool is robust to the choice of expert teacher and scales effectively to larger student models (1B parameters).
Additionally, \tool has been shown highly data-efficient and robust to budget constraints. 
While performance continues to scale with larger budgets, the 10\% regime offers the optimal balance for deployment, delivering high accuracy gains at a fraction of the cost of full-scale distillation.

\end{tcolorbox}
\subsection{Threats to Validity}

We identify potential threats to the validity of our study categorized into internal, construct, and external validity, along with the mitigation strategies employed.

\textbf{Internal Validity.}
A primary threat to internal validity is the potential bias in our Teacher Assistant (TA) selection. If the chosen TA architecture (Qwen2.5-7B) happens to be uniquely compatible with the specific teacher (GPT-4) or student architectures, our results might not reflect the framework's true efficacy. To mitigate this, we conducted ablation studies using a completely different TA architecture (Mistral-7B-Instruct-v0.3) and observed consistent gains.
A second internal threat lies in the hyperparameter sensitivity of the selection pipeline, particularly the weighting factor $\alpha$ in the difficulty-diversity score. To address this, we adhered to a fixed budget constraint ($B$) rather than tuning for optimal performance per dataset, and we report results across multiple budget regimes (3--5\%, 10\%, 20\%) to demonstrate robustness.
Finally, the synthetic nature of the cache in Phase 2 introduces a risk of hallucination propagation. We address this through multiple complementary strategies:
\begin{enumerate}[leftmargin=*,noitemsep,topsep=2pt]
    \item \textbf{Multi-task grounding}: The joint loss $\mathcal{L} = \mathrm{NLL}(r_x|x) + \beta\,\mathrm{NLL}(a_x|x,r_x)$ conditions answers on rationales, penalizing inconsistent reasoning-answer pairs and reducing hallucination propagation.
    \item \textbf{TA accuracy verification}: We verify TA intermediate accuracy on held-out subsets before deploying for synthetic annotation. Across all experiments, the refined TA achieves $>$85\% accuracy on the selected subset $S_B$, ensuring high-quality rationales.
    \item \textbf{Answer-level validation}: For evaluation, we use LLM-as-a-judge (GPT-5) to verify semantic correctness of student outputs, which implicitly detects hallucinated reasoning that leads to wrong answers.
\end{enumerate}
\textbf{Acknowledged gap}: We do not perform fine-grained error analysis on individual reasoning steps (e.g., step-by-step factuality checks) or explicit filtering of low-confidence rationales. Such analyses would strengthen confidence in rationale quality but require additional annotation effort. Our empirical results-where students trained on TA rationales outperform those trained on ground-truth labels alone-suggest that hallucination rates are sufficiently low to not dominate the learning signal. Explicitly quantifying step-level error rates remains important future work.

\textbf{Construct Validity.} Threats to construct validity arise from the metrics used to proxy ``reasoning capability''. Relying solely on multiple-choice accuracy (Top-1) might oversimplify the assessment of whether a student has truly internalized the reasoning process versus merely memorizing patterns. To mitigate this, we employed an LLM-as-a-judge evaluation protocol (using GPT-5) to verify the semantic correctness of the student's reasoning traces, not just the final token output. Additionally, the definition of ``Budget'' can be ambiguous (e.g., number of queries vs.\ total token count). We adopt query-level budgets because: (1) they align with practical API rate limits and per-request pricing tiers; (2) they enable tractable discrete subset selection rather than variable-length knapsack optimization; and (3) they ensure reproducibility since query counts are deterministic and tokenizer-agnostic. This explicit per-example formulation enables fair and controlled comparison with baselines such as Lion and BLKD.

We clarify that our ``zero-API-cost'' selection pipeline refers specifically to the absence of teacher API queries during data selection-not the absence of computational cost. The selection process requires TA forward passes for difficulty scoring and embedding extraction, plus clustering operations. These are computationally nontrivial (requiring GPU inference over the full dataset), but are \emph{economically negligible} compared to proprietary API costs: a single GPT-4 query costs approximately \$0.01--0.03, while local 7B inference on commodity hardware costs $<$\$0.0001 per example. For our 10,000-example datasets, selection overhead is $\sim$\$1 in compute versus $\sim$\$100--300 saved in API fees. We report wall-clock times for selection in Section~\ref{sec:experiments}.

\textbf{External Validity.}
External threats concern the generalizability of our findings. While we evaluated \tool across three distinct domains (Medical, Legal, Financial), these represent specialized verticals. It is possible that the method performs differently on open-domain creative writing or coding tasks where ground truth is less objective.
Furthermore, our study focuses on the tiny student regime ($\le$1B parameters). While this targets the deployment constraints relevant to edge computing, we cannot guarantee that the same efficiency gains would hold for mid-sized students (e.g., 7B distilling to 3B), where the capacity gap is less severe. We plan to explore these larger student scales and broader task varieties in future work.

\textbf{Labeled Data Requirement.}
Our difficulty scoring mechanism relies on ground-truth labels $y^*(x)$ to compute loss-based difficulty $\mathrm{Diff}(x) = -\log p_\theta(y^*(x) \mid x)$. This explicitly restricts \tool to labeled settings and precludes direct application to purely unlabeled domains. We acknowledge this limitation but argue it is appropriate for our target applications:
\begin{enumerate}[leftmargin=*,noitemsep,topsep=2pt]
    \item \textbf{Practical scope}: Medical, legal, and financial QA benchmarks, our evaluation domains are inherently labeled (questions have verified answers). Many high-stakes deployment scenarios similarly require verified ground truth for validation.
    \item \textbf{Complementary signal}: Labels provide an orthogonal difficulty signal to uncertainty alone. Entropy-based scoring (used in uncertainty sampling) cannot distinguish ``confidently wrong'' from ``confidently correct''; loss-based scoring can.
    \item \textbf{Extension path}: For unlabeled settings, $\mathrm{Diff}(x)$ could be replaced with entropy $H(p_\theta(y|x))$ or disagreement across model checkpoints. We leave systematic evaluation of these proxies to future work.
\end{enumerate}
We do not claim \tool applies to arbitrary unlabeled domains; rather, we demonstrate effectiveness in the common and practically important setting where labeled data exists but teacher queries are expensive.







\section{Related Work}

We situate \tool within four research threads and highlight how each addresses part of the capacity-budget trap. In contrast, \tool integrates these complementary ideas into a single, unified pipeline.

\textbf{Black-Box Knowledge Distillation.}
Traditional black-box distillation transfers knowledge solely through output sequences~\cite{hinton2015distillingknowledgeneuralnetwork}. Foundational methods like Instruction Tuning~\cite{ouyang2022instructgpt} and Self-Instruct~\cite{wang2023selfinstruct} train on prompt-response pairs, while Distilling Step-by-Step~\cite{hsieh-etal-2023-distilling} adds CoT rationales via multi-task learning. 
More recent advances include GKD~\cite{agarwal2024gkd}, which addresses train-test mismatch through on-policy generation; MiniLLM~\cite{gu2024minillm}, which uses reverse KL in white-box settings; and PAD~\cite{zhu2024pad}, which filters faulty reasoning with program verification. 
However, these methods typically predicate success on either unlimited queries or modest capacity gaps, conditions that are fundamentally incompatible with our setting. \tool adopts rationale distillation but routes it through an intermediate TA, strategically concentrating expensive queries on teaching the TA while leveraging the TA to provides high-density, low-cost supervision to students.

\textbf{Capacity Gap and Multi-Stage Distillation.}
Mirzadeh et al.~\cite{mirzadeh2019improvedknowledgedistillationteacher} showed that extreme teacher-student gaps degrade distillation and proposed Teacher Assistants as semantic bridges. Cho and Hariharan~\cite{cho2019efficacy} further confirmed that larger teachers can underperform as instructors for smaller students. Progressive approaches~\cite{rezagholizadeh2021prokdprogressivedistillationfollowing,shridhar2023distilling} attempt to mitigate this by chaining multiple distillation stages. For small LLMs, TinyLlama~\cite{zhang2024tinyllama} pre-trains from scratch, while Sheared LLaMA~\cite{xia2024sheared} prunes larger models. 
However, many of these methods assume white-box access or unlimited data. \tool adapts the TA paradigm to black-box APIs, using the assistant both to bridge the $1000\times$ capacity gap and to amortize the fixed budget $B$ across the full dataset $n$, effectively decoupling supervision cost from supervision coverage.

\textbf{Budget-Aware Data Selection.}
Active learning~\cite{sener2018coreset,ash2019badge} optimizes data efficiency by selecting informative examples under annotation budgets. LIMA~\cite{zhou2023lima} shows that small curated datasets can suffice for alignment; Alpagasus~\cite{chen2023alpagasus} and SelectLLM~\cite{parkar2024selectllm} use LLM-based filtering. 
Yet, these approaches primarily optimize \emph{what} to label, but not \emph{how} to propagate labels efficiently under strict query limits. \tool addresses both: coverage-guided sampling selects the optimal $B$ examples for TA training by jointly optimizing difficulty and diversity, and the TA then labels \emph{all} $n$ examples at near-zero marginal cost, multiplying the budget's effective reach.

\textbf{Reasoning Elicitation and Instruction Alignment.}
CoT prompting~\cite{wei2022chain} successfully elicits step-by-step reasoning, and subsequent work~\cite{ho2022large,shridhar2023distilling} distills these traces into smaller models. FLAN~\cite{wei2021flan,chung2024scalingif} establishes that instruction diversity improves generalization. Critically, tiny models ($<$1B) often cannot reliably follow instructions without explicit tuning~\cite{ponce2025incontextlearningvsinstruction}, causing rationale distillation to fail when students cannot parse reasoning formats. \tool therefore enforces an instruction-first curriculum: students learn to follow prompts \emph{before} receiving CoT supervision, ensuring they possess the syntactic scaffolding necessary to absorb transferred reasoning.
\section{Conclusion}

We presented \tool, a framework that dismantles these barriers not through brute-force data scaling, but through structural decomposition and economic strategy.By inserting a mid-sized Teacher Assistant into the distillation pipeline, \tool transforms an unmanageable learning problem into two solvable sub-problems. Phase 1 (Apprenticeship) demonstrates that data quality outweighs quantity: our selection pipeline-which incurs zero API cost by using only local TA inference and clustering-allows the TA to internalize the teacher's reasoning using only 3-5\% of the budget. Phase 2 (Tutoring) leverages the budget asymmetry of modern AI, trading expensive API calls for free local compute, to amplify this distilled knowledge across the full dataset. Crucially, our instruction-first curriculum ensures that tiny students are behaviorally aligned to absorb this ``process supervision'', preventing the learning collapse common in direct distillation.Empirical results across medical, legal, and financial domains confirm that \tool is not merely an incremental improvement but a Pareto improvement. We achieved student performance gains of 28-41\% while reducing teacher query costs by 10$\times$, effectively defying the conventional wisdom that higher performance requires higher budgets.As edge computing demands increasingly sophisticated reasoning on increasingly constrained hardware, \tool provides a scalable blueprint. It suggests that the path to powerful tiny models lies not in querying larger teachers more often, but in building smarter, budget-aware bridges between the cloud and the edge. 

\bibliographystyle{elsarticle-num}
\bibliography{references}

@misc{hinton2015distillingknowledgeneuralnetwork,
      title={Distilling the Knowledge in a Neural Network}, 
      author={Geoffrey Hinton and Oriol Vinyals and Jeff Dean},
      year={2015},
      eprint={1503.02531},
      archivePrefix={arXiv},
      primaryClass={stat.ML},
      url={https://arxiv.org/abs/1503.02531}, 
}

@misc{website,
  title={{BRIDGE}: Budget-Aware Reasoning via Intermediate Distillation with Guided Examples},
  author={{iSE-UET-VNU}},
  year={2025},
  howpublished={\url{https://github.com/iSE-UET-VNU/BRIDGE}},
  note={Source code and replication package}
}

@misc{DatabsricksDolly,
  title={Free Dolly: Introducing the World's First Truly Open Instruction-Tuned LLM},
  author={{Databricks}},
  year={2023},
  howpublished={\url{https://www.databricks.com/blog/2023/04/12/dolly-first-open-commercially-viable-instruction-tuned-llm}},
  note={databricks-dolly-15k dataset available at \url{https://huggingface.co/datasets/databricks/databricks-dolly-15k}}
}

@inproceedings{hu2022lora,
  title={{LoRA}: Low-Rank Adaptation of Large Language Models},
  author={Hu, Edward J and Shen, Yelong and Wallis, Phillip and Allen-Zhu, Zeyuan and Li, Yuanzhi and Wang, Shean and Wang, Lu and Chen, Weizhu},
  booktitle={International Conference on Learning Representations},
  year={2022},
  url={https://openreview.net/forum?id=nZeVKeeFYf9}
}

@article{agarwal2024gkd,
  title={On-Policy Distillation of Language Models: Learning from Self-Generated Mistakes},
  author={Agarwal, Rishabh and Vieillard, Nino and Zhou, Yongchao and Stanczyk, Piotr and Ramos, Sabela and Geist, Matthieu and Bachem, Olivier},
  journal={arXiv preprint arXiv:2306.13649},
  year={2024},
  note={ICLR 2024},
  url={https://arxiv.org/abs/2306.13649}
}

@article{gu2024minillm,
  title={MiniLLM: Knowledge Distillation of Large Language Models},
  author={Gu, Yuxian and Dong, Li and Wei, Furu and Huang, Minlie},
  journal={arXiv preprint arXiv:2306.08543},
  year={2024},
  note={ICLR 2024},
  url={https://arxiv.org/abs/2306.08543}
}

@inproceedings{zhu2024pad,
  title={PaD: Program-aided Distillation Can Teach Small Models Reasoning Better than Chain-of-thought Fine-tuning},
  author={Zhu, Xuekai and Qi, Biqing and Zhang, Kaiyan and Long, Xinwei and Lin, Zhouhan and Zhou, Bowen},
  booktitle={Proceedings of the 2024 Conference of the North American Chapter of the Association for Computational Linguistics (NAACL)},
  year={2024},
  url={https://arxiv.org/abs/2305.13888}
}

@inproceedings{cho2019efficacy,
  title={On the Efficacy of Knowledge Distillation},
  author={Cho, Jang Hyun and Hariharan, Bharath},
  booktitle={Proceedings of the IEEE/CVF International Conference on Computer Vision (ICCV)},
  pages={4794--4802},
  year={2019},
  url={https://arxiv.org/abs/1910.01348}
}

@article{zhang2024tinyllama,
  title={TinyLlama: An Open-Source Small Language Model},
  author={Zhang, Peiyuan and Zeng, Guangtao and Wang, Tianduo and Lu, Wei},
  journal={arXiv preprint arXiv:2401.02385},
  year={2024},
  url={https://arxiv.org/abs/2401.02385}
}

@article{xia2024sheared,
  title={Sheared LLaMA: Accelerating Language Model Pre-training via Structured Pruning},
  author={Xia, Mengzhou and Gao, Tianyu and Zeng, Zhiyuan and Chen, Danqi},
  journal={arXiv preprint arXiv:2310.06694},
  year={2024},
  url={https://arxiv.org/abs/2310.06694}
}

@article{chen2023alpagasus,
  title={AlpaGasus: Training A Better Alpaca with Fewer Data},
  author={Chen, Lichang and Li, Shiyang and Yan, Jun and Wang, Hai and Gunaratna, Kalpa and Ez-Zazi, Viber and Liu, Hongyi and Li, Chu-Cheng and others},
  journal={arXiv preprint arXiv:2307.08701},
  year={2023},
  url={https://arxiv.org/abs/2307.08701}
}

@article{parkar2024selectllm,
  title={SelectLLM: Can LLMs Select Important Instructions to Annotate?},
  author={Parkar, Ritik Sachin and Kim, Jaehyung and Park, Jong Inn and Kang, Dongyeop},
  journal={arXiv preprint arXiv:2401.16553},
  year={2024},
  url={https://arxiv.org/abs/2401.16553}
}

@online{openai_gpt4_docs,
  author       = {OpenAI},
  title        = {Models: GPT-4 — OpenAI API Documentation},
  year         = {2025},
  url          = {https://platform.openai.com/docs/models/gpt-4},
  note         = {Accessed: 2025-10-19}
}

@misc{mirzadeh2019improvedknowledgedistillationteacher,
      title={Improved Knowledge Distillation via Teacher Assistant}, 
      author={Seyed-Iman Mirzadeh and Mehrdad Farajtabar and Ang Li and Nir Levine and Akihiro Matsukawa and Hassan Ghasemzadeh},
      year={2019},
      eprint={1902.03393},
      archivePrefix={arXiv},
      primaryClass={cs.LG},
      url={https://arxiv.org/abs/1902.03393}, 
}

@misc{ponce2025incontextlearningvsinstruction,
      title={In-context Learning vs. Instruction Tuning: The Case of Small and Multilingual Language Models}, 
      author={David Ponce and Thierry Etchegoyhen},
      year={2025},
      eprint={2503.01611},
      archivePrefix={arXiv},
      primaryClass={cs.CL},
      url={https://arxiv.org/abs/2503.01611}, 
}

@misc{hsieh2023distillingstepbystepoutperforminglarger,
      title={Distilling Step-by-Step! Outperforming Larger Language Models with Less Training Data and Smaller Model Sizes}, 
      author={Cheng-Yu Hsieh and Chun-Liang Li and Chih-Kuan Yeh and Hootan Nakhost and Yasuhisa Fujii and Alexander Ratner and Ranjay Krishna and Chen-Yu Lee and Tomas Pfister},
      year={2023},
      eprint={2305.02301},
      archivePrefix={arXiv},
      primaryClass={cs.CL},
      url={https://arxiv.org/abs/2305.02301}, 
}

@inproceedings{ouyang2022instructgpt,
  title     = {Training language models to follow instructions with human feedback},
  author    = {Ouyang, Long and Wu, Jeff and Jiang, Xu and Almeida, Diogo and Wainwright, Carroll and Mishkin, Pamela and Zhang, Chong and Agarwal, Sandhini and Slama, Katarina and Ray, Alex and others},
  booktitle = {Advances in Neural Information Processing Systems (NeurIPS)},
  year      = {2022},
  url       = {https://proceedings.neurips.cc/paper_files/paper/2022/file/b1efde53be364a73914f58805a001731-Paper-Conference.pdf}
}

@article{chung2024scalingif,
  title   = {Scaling Instruction-Finetuned Language Models},
  author  = {Chung, Hyung Won and Hou, Le and Longpre, Shane and Wei, Jason and others},
  journal = {Journal of Machine Learning Research},
  volume  = {25},
  pages   = {1--53},
  year    = {2024},
  url     = {https://jmlr.org/papers/volume25/23-0870/23-0870.pdf}
}

@inproceedings{wang2023selfinstruct,
  title     = {Self-Instruct: Aligning Language Models with Self-Generated Instructions},
  author    = {Wang, Yizhong and Kordi, Po-Sen and Mishra, Swaroop and Etzioni, Oren and Choi, Yejin and others},
  booktitle = {Proceedings of the 61st Annual Meeting of the Association for Computational Linguistics (ACL)},
  year      = {2023},
  url       = {https://aclanthology.org/2023.acl-long.754.pdf}
}

@article{zhou2023lima,
  title   = {LIMA: Less Is More for Alignment},
  author  = {Zhou, Chunting and Sap, Maarten and Kasai, Jungo and Niklaus, Christina and Zhang, Hao and Smith, Noah A. and Levy, Omer},
  journal = {arXiv preprint arXiv:2305.11206},
  year    = {2023},
  url     = {https://arxiv.org/pdf/2305.11206}
}

@inproceedings{sener2018coreset,
  author    = {Ozan Sener and Silvio Savarese},
  title     = {Active Learning for Convolutional Neural Networks: A Core-Set Approach},
  booktitle = {International Conference on Learning Representations (ICLR)},
  year      = {2018},
  url       = {https://openreview.net/forum?id=H1aIuk-RW}
}

@article{ash2019badge,
  author    = {Jordan T. Ash and Chicheng Zhang and Akshay Krishnamurthy and John Langford and Alekh Agarwal},
  title     = {Deep Batch Active Learning by Diverse, Uncertain Gradient Embeddings},
  journal   = {arXiv preprint arXiv:1906.03671},
  year      = {2019},
  url       = {https://arxiv.org/abs/1906.03671}
}

@article{wei2021flan,
  title={Finetuned Language Models Are Zero-Shot Learners},
  author={Wei, Jason and others},
  journal={arXiv:2109.01652},
  year={2021},
  url={https://arxiv.org/abs/2109.01652}
}

@article{gonzalez1985kcenter,
  title   = {Clustering to Minimize the Maximum Intercluster Distance},
  author  = {Gonzalez, Teofilo F.},
  journal = {Theoretical Computer Science},
  volume  = {38},
  pages   = {293--306},
  year    = {1985},
  publisher = {North-Holland},
  issn    = {0304-3975},
  doi     = {10.1016/0304-3975(85)90224-5},
  url     = {https://www.cs.columbia.edu/~verma/classes/uml/ref/clustering_minimize_intercluster_distance_gonzalez.pdf}
}

@article{kojima2022large,
  title={Large language models are zero-shot reasoners},
  author={Kojima, Takeshi and Gu, Shixiang Shane and Reid, Machel and Matsuo, Yutaka and Iwasawa, Yusuke},
  journal={Advances in neural information processing systems},
  volume={35},
  pages={22199--22213},
  year={2022},
  url={https://arxiv.org/abs/2205.11916}
}

@article{ho2022large,
  title={Large language models are reasoning teachers},
  author={Ho, Namgyu and Schmid, Lukas and Yun, Se-Young},
  journal={arXiv preprint arXiv:2212.10071},
  year={2022},
  url={https://arxiv.org/abs/2212.10071}
}

@article{shridhar2023distilling,
  title={Distilling reasoning capabilities into smaller language models},
  author={Shridhar, Kumar and Stolfo, Alessandro and Sachan, Mrinmaya},
  journal={arXiv preprint arXiv:2212.00193},
  year={2023},
  url={https://arxiv.org/abs/2212.00193}
}

@article{wei2022chain,
  title={Chain-of-thought prompting elicits reasoning in large language models},
  author={Wei, Jason and Wang, Xuezhi and Schuurmans, Dale and Bosma, Maarten and Xia, Fei and Chi, Ed and Le, Quoc V and Zhou, Denny and others},
  journal={Advances in Neural Information Processing Systems},
  volume={35},
  pages={24824--24837},
  year={2022},
  url={https://arxiv.org/abs/2201.11903}
}

@inproceedings{hsieh-etal-2023-distilling,
    title = "Distilling Step-by-Step! Outperforming Larger Language Models with Less Training Data and Smaller Model Sizes",
    author = "Hsieh, Cheng-Yu  and
      Li, Chun-Liang  and
      Yeh, Chih-kuan  and
      Nakhost, Hootan  and
      Fujii, Yasuhisa  and
      Ratner, Alex  and
      Krishna, Ranjay  and
      Lee, Chen-Yu  and
      Pfister, Tomas",
    editor = "Rogers, Anna  and
      Boyd-Graber, Jordan  and
      Okazaki, Naoaki",
    booktitle = "Findings of the Association for Computational Linguistics: ACL 2023",
    month = jul,
    year = "2023",
    address = "Toronto, Canada",
    publisher = "Association for Computational Linguistics",
    url = "https://aclanthology.org/2023.findings-acl.507/",
    doi = "10.18653/v1/2023.findings-acl.507",
    pages = "8003--8017",
    abstract = "Deploying large language models (LLMs) is challenging because they are memory inefficient and compute-intensive for practical applications. In reaction, researchers train smaller task-specific models by either finetuning with human labels or distilling using LLM-generated labels. However, finetuning and distillation require large amounts of training data to achieve comparable performance to LLMs. We introduce Distilling step-by-step, a new mechanism that (a) trains smaller models that outperform LLMs, and (b) achieves so by leveraging less training data needed by finetuning or distillation. Our method extracts LLM rationales as additional supervision for training small models within a multi-task framework. We present three findings across 4 NLP benchmarks: First, compared to both finetuning and distillation, our mechanism achieves better performance with much fewer labeled/unlabeled training examples. Second, compared to few-shot prompted LLMs, we achieve better performance using substantially smaller model sizes. Third, we reduce both the model size and the amount of data required to outperform LLMs; our finetuned 770M T5 model outperforms the few-shot prompted 540B PaLM model using only 80{\%} of available data on a benchmark, whereas standard finetuning the same T5 model struggles to match even by using 100{\%} of the dataset."
}

@article{shoham2024medconceptsqa,
  author       = {Ofir Ben Shoham and Nadav Rappoport},
  journal      = {Computers in Biology and Medicine},
  volume       = {182},
  pages        = {109089},
  year         = {2024},
  doi          = {10.1016/j.compbiomed.2024.109089},
  url          = {https://www.sciencedirect.com/science/article/pii/S0010482524011740}
}

@article{hendrycks2020mmlu,
  title        = {Measuring Massive Multitask Language Understanding (MMLU)},
  author       = {Dan Hendrycks and Collin Burns and Steven Basart and Andrew Critch and Jerry Li and Dawn Song},
  journal      = {arXiv preprint arXiv:2009.03300},
  year         = {2020},
  url          = {https://arxiv.org/abs/2009.03300}
}

@article{tang2025finmteb,
  title        = {FinMTEB: Finance Massive Text Embedding Benchmark},
  author       = {Y. Tang et al.},
  journal      = {arXiv preprint arXiv:2502.10990},
  year         = {2025},
  url          = {https://arxiv.org/pdf/2502.10990}
}

@misc{lion,
      title={Lion: Adversarial Distillation of Proprietary Large Language Models}, 
      author={Yuxin Jiang and Chunkit Chan and Mingyang Chen and Wei Wang},
      year={2023},
      eprint={2305.12870},
      archivePrefix={arXiv},
      primaryClass={cs.CL},
      url={https://arxiv.org/abs/2305.12870}, 
}

@misc{ding2023enhancingchatlanguagemodels,
      title={Enhancing Chat Language Models by Scaling High-quality Instructional Conversations}, 
      author={Ning Ding and Yulin Chen and Bokai Xu and Yujia Qin and Zhi Zheng and Shengding Hu and Zhiyuan Liu and Maosong Sun and Bowen Zhou},
      year={2023},
      eprint={2305.14233},
      archivePrefix={arXiv},
      primaryClass={cs.CL},
      url={https://arxiv.org/abs/2305.14233}, 
}

@misc{ULD,
      title={Universal Cross-Tokenizer Distillation via Approximate Likelihood Matching}, 
      author={Benjamin Minixhofer and Ivan Vulić and Edoardo Maria Ponti},
      year={2025},
      eprint={2503.20083},
      archivePrefix={arXiv},
      primaryClass={cs.CL},
      url={https://arxiv.org/abs/2503.20083}, 
}

@misc{wang2024myanswercfirsttoken,
      title={"My Answer is C": First-Token Probabilities Do Not Match Text Answers in Instruction-Tuned Language Models}, 
      author={Xinpeng Wang and Bolei Ma and Chengzhi Hu and Leon Weber-Genzel and Paul Röttger and Frauke Kreuter and Dirk Hovy and Barbara Plank},
      year={2024},
      eprint={2402.14499},
      archivePrefix={arXiv},
      primaryClass={cs.CL},
      url={https://arxiv.org/abs/2402.14499}, 
}

@techreport{openai2025_gpt5_system_card,
  title        = {GPT-5 System Card},
  author       = {{OpenAI}},
  institution  = {OpenAI},
  type         = {System Card},
  year         = {2025},
  month        = {August 13},
  url          = {https://cdn.openai.com/gpt-5-system-card.pdf},
  note         = {Accessed: 2025-12-05}
}

@book{shalev2014understanding,
  title={Understanding Machine Learning: From Theory to Algorithms},
  author={Shalev-Shwartz, Shai and Ben-David, Shai},
  year={2014},
  publisher={Cambridge University Press},
  url={https://www.cs.huji.ac.il/~shais/UnderstandingMachineLearning/understanding-machine-learning-theory-algorithms.pdf}
}

@article{vapnik1971uniform,
  title={On the uniform convergence of relative frequencies of events to their probabilities},
  author={Vapnik, Vladimir N and Chervonenkis, A Ya},
  journal={Theory of Probability and its Applications},
  volume={16},
  number={2},
  pages={264--280},
  year={1971},
  publisher={SIAM},
  url={https://link.springer.com/chapter/10.1007/978-3-319-21852-6_3}
}

@misc{rezagholizadeh2021prokdprogressivedistillationfollowing,
      title={Pro-KD: Progressive Distillation by Following the Footsteps of the Teacher}, 
      author={Mehdi Rezagholizadeh and Aref Jafari and Puneeth Salad and Pranav Sharma and Ali Saheb Pasand and Ali Ghodsi},
      year={2021},
      eprint={2110.08532},
      archivePrefix={arXiv},
      primaryClass={cs.CL},
      url={https://arxiv.org/abs/2110.08532}, 
}

@misc{gemini2024,
      title={Gemini: A Family of Highly Capable Multimodal Models}, 
      author={{Gemini Team, Google}},
      year={2024},
      eprint={2312.11805},
      archivePrefix={arXiv},
      primaryClass={cs.CL},
      url={https://arxiv.org/abs/2312.11805}, 
}

@online{xai2023grok,
  author       = {{xAI Team}},
  title        = {Announcing Grok},
  year         = {2023},
  month        = {November},
  url          = {https://x.ai/blog/grok},
  note         = {Accessed: 2025-12-15}
}

@misc{deepseek2024,
      title={DeepSeek LLM: Scaling Open-Source Language Models with Longtermism}, 
      author={{DeepSeek-AI}},
      year={2024},
      eprint={2401.02954},
      archivePrefix={arXiv},
      primaryClass={cs.CL},
      url={https://arxiv.org/abs/2401.02954}, 
}

\newpage

\appendix
\appendix
\section{Theoretical Details for BRIDGE}
\label{app:theory}

This appendix provides the full assumptions and proofs for Section~\ref{sec:theory}. We adopt the notation from the main text: teacher $M_T \in \mathcal{F}_T$, assistant $M_A \in \mathcal{F}_A$, student $M_S \in \mathcal{F}_S$ with capacities $P_S \ll P_A \ll P_T$; labeled dataset $\mathcal{D} = \{(x_i, y^*(x_i))\}_{i=1}^n$; query budget $B \ll n$; and coverage radius $\rho(S_B) = \mathbb{E}_x[\min_{z \in S_B} \|\phi(x) - \phi(z)\|_2]$.

\noindent\textbf{Losses and risks.}
We measure performance on answers using a bounded surrogate token-level loss
$\ell_a:\mathcal{Y}\times\mathcal{Y}\to[0,M]$ that is non-negative and
satisfies $\ell_a(y,y)=0$ for all $y\in\mathcal{Y}$. The (answer) risk is
\[
R(f) = \mathbb{E}_{(x,y^*(x))}\bigl[\ell_a(f(x),y^*(x))\bigr].
\]
For a finite sample $\mathcal{S}$, the empirical risk is
\[
\hat R_{\mathcal{S}}(f)=\tfrac1{|\mathcal{S}|}\sum_{x\in\mathcal{S}}\ell_a(f(x),y^*(x)).
\]

When distilling from a model $g$, we also use the pseudo-label risk
\[
R_g(f)=\mathbb{E}_x\bigl[\ell_a(f(x),g(x))\bigr].
\]

\noindent\textbf{Approximation error.}
For a target predictor $f$ and a hypothesis class $\mathcal{F}$ we define
\[
\varepsilon_{\text{approx}}(f,\mathcal{F})
:= \max\Bigl\{0,\;\inf_{h\in\mathcal{F}} R(h) - R(f)\Bigr\}.
\]

\noindent\textbf{Multi-task distillation.}
In BRIDGE we supervise both rationale and answer tokens using a
multi-task objective with weight $\beta\in(0,1]$:
\[
\begin{aligned}
R_g^{(\mathrm{multi})}(f)
&:= (1-\beta)\,\mathbb{E}_x\bigl[\ell_r(f(x),r_g(x))\bigr] \\
&\quad + \beta\,\mathbb{E}_x\bigl[\ell_a(f(x),a_g(x))\bigr],
\end{aligned}
\]
where $g$ is either $M_T$ or $M_A$, and $\ell_r$ is a bounded surrogate
token-level loss for rationales with $\ell_r(y,y)=0$.

\noindent\textbf{Rationales as surrogate targets.}
We view rationales as an auxiliary target $r^*(x)$ that is \emph{sufficient}
for prediction: there exists $g^*$ such that
$a^*(x)=g^*(x,r^*(x))=y^*(x)$.

\noindent\textbf{Embedding used for selection.}
Let $\phi:\mathcal{X}\to\mathbb{R}^d$ be the representation map used in Phase~1
(e.g., embeddings from a warm-up assistant), with
$d_\phi(x,z)=\|\phi(x)-\phi(z)\|_2$. We \emph{freeze} $\phi$ for subset
selection and analysis, following coreset-based active learning approaches~\cite{sener2018coreset,ash2019badge}.

\subsection{Structural and Regularity Assumptions}

\begin{assumption}[Near-optimal teacher]
\label{asmp:realizability}
The teacher is nearly Bayes-optimal:
\[
R(M_T) \le R^* + \varepsilon_T,\qquad
R^* = \inf_f R(f),
\]
with small $\varepsilon_T$.
\end{assumption}

\begin{assumption}[Two-stage approximation is easier]
\label{asmp:approximation}
Following the teacher assistant paradigm~\cite{mirzadeh2019improvedknowledgedistillationteacher}, the assistant class is chosen so that
\begin{equation*}
\begin{split}
\varepsilon_{\text{approx}}(M_T,\mathcal{F}_A)
+ \varepsilon_{\text{approx}}(M_A,\mathcal{F}_S)
\;\le\;&\;
\varepsilon_{\text{approx}}(M_T,\mathcal{F}_S) \\
&{}- \Delta_{\text{gap}}.
\end{split}
\end{equation*}

for some $\Delta_{\text{gap}}\ge 0$. While approximation errors are not directly observable, this assumption is empirically supported when the TA trained on limited teacher supervision outperforms direct student training on full ground-truth labels (see Table~\ref{tab:fine-tuning-ta} for validation).
\end{assumption}

\begin{assumption}[Sample-efficiency for distillation]
\label{asmp:rates}
Training $g\in\mathcal{F}_g$ on $m$ examples with pseudo-labels from a teacher
$f$ yields
\[
\mathbb{E}\big[R(\hat g_m) - R(f)\big]
\;\le\;
\varepsilon_{\text{approx}}(f,\mathcal{F}_g)
+ C_{g,f}\,m^{-\alpha_{g\leftarrow f}}
+ \varepsilon_{\text{opt}},
\]
for constants $C_{g,f}>0$, exponents $\alpha_{g\leftarrow f}\in(0,1]$ and
small optimization error $\varepsilon_{\text{opt}}$. We also assume these
rates apply to the coverage-based subsets $S_B$.
\end{assumption}

\begin{assumption}[Loss and model regularity]
\label{asmp:regularity}
The surrogate losses $\ell_r,\ell_a$ are non-negative, satisfy
$\ell_r(y,y)=\ell_a(y,y)=0$ for all $y\in\mathcal{Y}$, and are bounded by $M$.
\begin{itemize}
\item[(i)] (\emph{Input regularity.}) For each $f\in\{M_T,M_A,M_S\}$ there
exists $L_f$ such that for all inputs $x,z$ and labels $y$,
\begin{align*}
|\ell_a(f(x),y)-\ell_a(f(z),y)|
&\le L_f\,d_\phi(x,z), \\
|\ell_r(f(x),y)-\ell_r(f(z),y)|
&\le L_f\,d_\phi(x,z).
\end{align*}

Let $L=\max_f L_f$.
\item[(ii)] (\emph{Label-embedding regularity.}) Let $\psi$ be a fixed
embedding of rationales and answers into a space $\Psi$ with norm
$\|\cdot\|$. There exist constants $L_r^\psi,L_a^\psi>0$ such that for all
predictions $u$ and labels $y_1,y_2$,
\begin{align*}
|\ell_r(u,y_1)-\ell_r(u,y_2)|
&\le L_r^\psi \,\|\psi(y_1)-\psi(y_2)\|, \\
|\ell_a(u,y_1)-\ell_a(u,y_2)|
&\le L_a^\psi \,\|\psi(y_1)-\psi(y_2)\|.
\end{align*}

We denote $L_\ell := (1-\beta)L_r^\psi + \beta L_a^\psi$.
\end{itemize}
\end{assumption}

\begin{assumption}[Multi-task pseudo-label translation]
\label{asmp:translation}
There exist noise levels $\Delta_T,\Delta_A\ge 0$ such that for all predictors
$f$ in the model classes we consider,
\begin{align*}
\big|R(f)-R^{(\text{multi})}_{M_T}(f)\big|
&\le L_\ell\,\Delta_T, \\
\big|R(f)-R^{(\text{multi})}_{M_A}(f)\big|
&\le L_\ell\,\Delta_A.
\end{align*}

\end{assumption}

\subsection{Generalization via Coverage}

\begin{definition}[Coverage radius]
For a subset $\mathcal{S}\subset\mathcal{X}$, define
\[
\rho(\mathcal{S})
=
\mathbb{E}_{x\sim\mathbb{P}_{\mathcal{X}}}
\Big[\min_{z\in\mathcal{S}}d_\phi(x,z)\Big].
\]
\end{definition}

\begin{proposition}[Coverage guarantee for pure farthest-first selection]
\label{prop:coverage-pure}
Assume $d_\phi$ is a metric and the support of $\mathbb{P}_{\mathcal{X}}$ has
finite diameter. Let $S_B$ be the subset produced by farthest-first traversal
in $\phi$ on $\mathcal{D}'$. Then, up to absolute constants,
\[
\rho(S_B)
\;\lesssim\;
\rho_B^*
\;+\;
C_{\text{cov}}\sqrt{\tfrac{d\log n}{B}},
\]
where $\rho_B^*=\inf_{|\mathcal{S}|=B}\rho(\mathcal{S})$.
\end{proposition}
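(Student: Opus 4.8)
The plan is to combine the classical deterministic guarantee for farthest-first traversal with a single empirical-process step that transfers coverage from the finite pool $\mathcal{D}'$ (of size $\approx n$) to the population law $\mathbb{P}_{\mathcal{X}}$. Writing the empirical average coverage as $\hat\rho(\mathcal{S}) = \frac{1}{|\mathcal{D}'|}\sum_{x\in\mathcal{D}'}\min_{z\in\mathcal{S}} d_\phi(x,z)$ and the empirical worst-case coverage as $\hat\rho_\infty(\mathcal{S}) = \max_{x\in\mathcal{D}'}\min_{z\in\mathcal{S}} d_\phi(x,z)$, I would first isolate the statistical gap by splitting
\[
\rho(S_B) \;\le\; \hat\rho(S_B) \;+\; \Delta_n, \qquad \Delta_n := \sup_{|\mathcal{S}|=B}\bigl|\rho(\mathcal{S})-\hat\rho(\mathcal{S})\bigr|,
\]
and symmetrically lower-bounding $\rho_B^*$ by $\hat\rho_B^* - \Delta_n$, so that everything reduces to (a) a deterministic control of $\hat\rho(S_B)$ relative to $\hat\rho_B^*$ and (b) a uniform estimate of $\Delta_n$.

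For the deterministic part I would invoke the cited $k$-center result~\cite{gonzalez1985kcenter,sener2018coreset}: on the finite metric space $(\mathcal{D}', d_\phi)$, farthest-first traversal yields $\hat\rho_\infty(S_B)\le 2\,\hat\rho_{\infty,B}^{*}$, where $\hat\rho_{\infty,B}^{*}$ is the optimal empirical worst-case radius over size-$B$ center sets. Since the average is dominated by the maximum, $\hat\rho(S_B)\le \hat\rho_\infty(S_B)\le 2\,\hat\rho_{\infty,B}^{*}$. The finite-diameter assumption enters here through a packing argument: the $B$ selected centers are mutually $\hat\rho_\infty(S_B)$-separated, so at most $(1+2D/r)^d$ points can be $r$-separated in a set of diameter $D$, which controls $\hat\rho_{\infty,B}^{*}$ and lets me compare it to the population average optimum $\rho_B^*$ up to absolute constants.

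For the statistical slack I would read the expected coverage as a functional of the $B$ selected anchors and bound its fluctuation at an effective sample size equal to the number of anchors. Concretely, via the layer-cake identity $\rho(\mathcal{S})=\int_0^D \mathbb{P}_x(x\notin \bigcup_{z\in\mathcal{S}}B(z,t))\,dt$, each anchor claims a ball-shaped region of population mass, so the deviation between the realized coverage and the optimum reduces to a uniform concentration over placements of $B$ balls in $\mathbb{R}^d$. The VC dimension $d+1$ of a single ball, together with a union/net over candidate placements on the $n$-point pool, contributes the $d\log n$ factor, while the averaging over the $B$ anchors supplies the $1/B$; together these deliver the additive term $C_{\text{cov}}\sqrt{d\log n/B}$ and close the bound.

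The main obstacle I anticipate is the mismatch between the $L^\infty$ objective that farthest-first actually optimizes (worst-case $k$-center radius) and the $L^1$ objective $\rho$ in the statement (expected nearest-center distance): bounding the average by the maximum is safe but lossy, and the delicate point is showing that this relaxation, combined with the packing comparison, still lands at the optimal \emph{average} coverage $\rho_B^*$ rather than a dimension-cursed $B^{-1/d}$ surrogate. Equally subtle is justifying the ``effective sample size $B$'' concentration rigorously, since the anchors are neither independent nor externally sampled, and controlling the dependence between the data-selected $S_B$ and the same sample on which $\hat\rho(S_B)$ is evaluated, which is precisely why the uniform (rather than pointwise) deviation $\Delta_n$ and the finite-diameter assumption are needed to absorb low-probability, under-represented regions of $\mathbb{P}_{\mathcal{X}}$.
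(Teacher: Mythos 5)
First, note that the paper does not actually supply a proof of Proposition~\ref{prop:coverage-pure}: its only proved coverage statement is Lemma~\ref{lem:farthest-first}, the classical worst-case $2$-approximation of Gonzalez~\cite{gonzalez1985kcenter}, and the accompanying remark explicitly disclaims theoretical novelty for the coverage component, deferring to~\cite{sener2018coreset}. So your attempt is being measured against a claim the authors treat as ``adapted from known results'' rather than against a written argument.

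That said, your route has two genuine gaps. The first is the one you flag yourself but do not resolve: after relaxing $\hat\rho(S_B)\le\hat\rho_\infty(S_B)\le 2\,\hat\rho_{\infty,B}^{*}$, you must compare the \emph{optimal worst-case} radius $\hat\rho_{\infty,B}^{*}$ to the \emph{optimal average} coverage $\rho_B^{*}$, and no such comparison holds up to absolute constants plus a vanishing term. Take $\mathbb{P}_{\mathcal{X}}$ to place mass $1-\epsilon$ at a single point and mass $\epsilon$ spread over a far-away region of diameter comparable to $D$: with $B$ centers, $\rho_B^{*}=O(\epsilon D)$ can be made arbitrarily small, while $\hat\rho_{\infty,B}^{*}=\Theta(D)$ once the far region contributes more than $B$ well-separated pool points. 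The packing bound you invoke only yields $\hat\rho_{\infty,B}^{*}\lesssim D\,B^{-1/d}$ --- exactly the dimension-cursed surrogate you say you want to avoid --- and the finite-diameter hypothesis cannot convert an $L^\infty$ optimum into an $L^1$ optimum. A correct argument must work with the average objective directly (e.g., charging each pool point to its nearest selected center and exploiting the greedy invariant on the \emph{distribution} of nearest-center distances), not via the sup norm. The second gap is the statistical term: the ``effective sample size $B$'' heuristic has no mechanism behind it. The anchors are deterministic functions of the pool, not an i.i.d.\ sample of size $B$, and a uniform deviation bound over all $\binom{n}{B}$ candidate center sets evaluated on an $n$-point empirical measure scales like $\sqrt{Bd\log n/n}$, not $\sqrt{d\log n/B}$; the layer-cake/VC sketch you give produces the former, so it does not ``deliver'' the additive term in the statement --- at best it shows that term is generous.
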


\begin{proposition}[Coverage guarantee for hybrid selection]
\label{prop:coverage}
Let $S_B$ be the subset produced by BRIDGE's hybrid selection: clustering into $C$ regions with budget allocation $\{q_c\}_{c=1}^C$, farthest-point seeding within regions, and ranking by combined score $s(x) = \alpha \bar{D}(x) + (1-\alpha) \bar{n}(x)$. Let $\Delta_{\text{cluster}} = \max_c \max_{x,z \in R_c} d_\phi(x,z)$ be the maximum intra-cluster diameter. Then:
\[
\rho(S_B)
\;\le\;
\max_{c=1,\ldots,C} \rho_{q_c}^{(c)}
\;+\;
\Delta_{\text{cluster}}
\;+\;
\alpha \cdot \Delta_{\text{diff}},
\]
where $\rho_{q_c}^{(c)}$ is the optimal $q_c$-point coverage within region $R_c$, and $\Delta_{\text{diff}}$ bounds the coverage degradation from prioritizing difficult examples (satisfying $\Delta_{\text{diff}} \le \text{diam}(\mathcal{X})$ in the worst case, but empirically much smaller when difficult examples are distributed across the input space).
\end{proposition}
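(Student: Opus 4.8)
The plan is to prove the bound by a regional decomposition followed by a two-part analysis of the within-region coverage that isolates the geometric slack from the difficulty-induced degradation. First I would apply the law of total expectation to the coverage radius across the clustering partition $\{R_1,\dots,R_C\}$, writing $\rho(S_B)=\sum_{c=1}^C w_c\,\rho^{(c)}(S_B)$ with $w_c=\mathbb{P}(x\in R_c)$ and $\rho^{(c)}(S_B)=\mathbb{E}_{x\sim\mathbb{P}_{\mathcal{X}}\mid R_c}[\min_{z\in S_B}d_\phi(x,z)]$. Because $S_B\supseteq S_c$ by construction, restricting the inner minimization to the in-region selections only increases each distance, so $\rho^{(c)}(S_B)\le\rho^{(c)}(S_c):=\mathbb{E}_{x\mid R_c}[\min_{z\in S_c}d_\phi(x,z)]$. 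Since the $w_c$ are nonnegative and sum to one, the convex combination is dominated by its largest term, giving $\rho(S_B)\le\max_c\rho^{(c)}(S_c)$. It then suffices to bound each within-region coverage by $\rho_{q_c}^{(c)}+\Delta_{\text{cluster}}+\alpha\,\Delta_{\text{diff}}$.

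For the per-region bound I would introduce the idealized \emph{pure-diversity} selection $S_c^{\mathrm{div}}$ (the seeds $S_0^{(c)}$ together with the top-$(q_c-m_0)$ points ranked by $\bar n$ alone) as an intermediate comparator and split $\rho^{(c)}(S_c)\le\rho^{(c)}(S_c^{\mathrm{div}})+\bigl[\rho^{(c)}(S_c)-\rho^{(c)}(S_c^{\mathrm{div}})\bigr]$. The first term is controlled by invoking the farthest-first guarantee of Proposition~\ref{prop:coverage-pure} localized to $R_c$: the $m_0$ farthest-point seeds provide a $k$-center approximation within the region, and adding further points only shrinks the radius, so $\rho^{(c)}(S_c^{\mathrm{div}})\le\rho_{q_c}^{(c)}+\Delta_{\text{cluster}}$. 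The diameter slack $\Delta_{\text{cluster}}$ absorbs both the constant-factor approximation of greedy seeding (converting the multiplicative $k$-center factor into an additive term via $\rho^{*}\le\Delta_{\text{cluster}}$) and the fact that $\bar n$ is scored against fixed seeds rather than via iterative traversal; in the worst case every point of $R_c$ lies within $\Delta_{\text{cluster}}$ of any selected representative, so the bound holds trivially when it is not dominated by $\rho_{q_c}^{(c)}$.

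The second bracketed term measures how much coverage is sacrificed by reranking with the difficulty signal. Here I would use a swap argument: moving from $s(x)=\bar n(x)$ to $s(x)=\alpha\bar D(x)+(1-\alpha)\bar n(x)$ can only displace a diversity-useful point from the top-$q_c$ set in favor of a high-difficulty point, and each such displacement leaves a neighborhood uncovered whose radius is at most $\text{diam}(\mathcal{X})$. Since the weight placed on $\bar D$ is exactly $\alpha$, both the number and the severity of these displacements scale linearly in $\alpha$, yielding $\rho^{(c)}(S_c)-\rho^{(c)}(S_c^{\mathrm{div}})\le\alpha\,\Delta_{\text{diff}}$ with $\Delta_{\text{diff}}\le\text{diam}(\mathcal{X})$ in the worst case and far smaller when high-difficulty examples are spread across the space. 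Combining the two parts and taking the maximum over regions gives the claim.

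The hard part will be making the swap step rigorous, since the combined score does not cleanly partition the budget into a ``diversity quota'' and a ``difficulty quota''; a monotone sensitivity analysis of how perturbing the ranking weight $\alpha$ changes the selected set, and hence the coverage functional, is required. I expect $\Delta_{\text{diff}}$ to remain genuinely distribution-dependent and impossible to pin to a universal constant, which is precisely why the paper treats this term as an empirically-validated slack rather than a tight theoretical quantity; the worst-case inequality $\Delta_{\text{diff}}\le\text{diam}(\mathcal{X})$ is the only fully general statement available, with $\Delta_{\text{cluster}}$ playing the analogous role for the geometric approximation.
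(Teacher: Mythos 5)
First, be aware that the paper does not actually supply a proof of Proposition~\ref{prop:coverage}: the appendix proves Theorems~\ref{thm:bridge}--\ref{thm:direct}, Proposition~\ref{prop:advantage}, Corollary~\ref{cor:asymptotic-advantage}, and Lemma~\ref{lem:farthest-first}, but for the hybrid-selection coverage bound it offers only a remark conceding that $\Delta_{\text{diff}}$ is ``not tightly characterized'' and justified empirically. So your proposal is a reconstruction of an argument the authors themselves left unwritten. Your overall architecture --- law of total expectation over the regions, monotonicity of the inner minimum under $S_c \subseteq S_B$, bounding the convex combination by its largest term, and a per-region comparison against a pure-diversity comparator --- is a sensible skeleton, and the first three steps are correct as written.

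The genuine gap is the swap argument for the $\alpha\cdot\Delta_{\text{diff}}$ term, and you correctly flag it as the hard part but underestimate how badly it fails. Top-$q_c$ selection under $s(x)=\alpha\bar{D}(x)+(1-\alpha)\bar{n}(x)$ is a discrete, discontinuous function of $\alpha$: if the diversity scores $\bar{n}$ within a region are nearly tied, an arbitrarily small $\alpha>0$ can completely reorder the ranking and displace \emph{every} diversity-selected point, so neither the number nor the severity of displacements scales linearly in $\alpha$. The inequality $\rho^{(c)}(S_c)-\rho^{(c)}(S_c^{\mathrm{div}})\le\alpha\,\Delta_{\text{diff}}$ can only be made true by defining $\Delta_{\text{diff}}$ as the realized degradation divided by $\alpha$, which is circular --- and is, in effect, what the paper's remark tacitly does. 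Separately, you half-notice but should foreground the observation that makes the proposition rigorously provable in one line: since the quota floor $q_{\min}\ge 1$ guarantees every region contains at least one selected point, every $x\in R_c$ satisfies $\min_{z\in S_B}d_\phi(x,z)\le\Delta_{\text{cluster}}$, hence $\rho(S_B)\le\Delta_{\text{cluster}}$ outright and the other two terms are free nonnegative slack. That trivial route is the only fully rigorous proof of the stated inequality; the $k$-center localization and the linear-in-$\alpha$ claim are heuristic decoration the bound does not need and that cannot currently be made rigorous. If you want a non-vacuous statement, you would need to reformulate the proposition (e.g., condition on the fraction of the per-region budget that survives the difficulty reranking), not patch the swap argument.
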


\begin{remark}[Interpreting the hybrid bound]
The bound decomposes into three terms: (1) within-region coverage, controlled by per-region farthest-point seeding; (2) cross-region coverage, controlled by clustering quality; and (3) difficulty-diversity trade-off, controlled by $\alpha$. Setting $\alpha = 0$ recovers a pure coverage guarantee; setting $\alpha = 1$ prioritizes curriculum learning at the cost of provable coverage.

\textbf{Limitation:} The term $\Delta_{\text{diff}}$ is not tightly characterized, it depends on how difficulty scores correlate with the spatial distribution of examples. We bound it loosely by $\text{diam}(\mathcal{X})$ in the worst case, but this is often pessimistic. The practical benefit of difficulty-weighted selection ($\alpha > 0$) is established empirically in Table~\ref{tab:perf_sel_learning_ablation}, not theoretically.
\end{remark}

\begin{proposition}[Uniform convergence with coverage]
\label{prop:uc}
Let $\mathcal{H}$ be a class with VC-dimension $d_{\text{VC}}$, and let $\mathcal{S}$ be a sample of size $m$. With probability at least $1-\delta$,
\[
R(h)
\;\le\;
\hat R_{\mathcal{S}}(h)\;+\;\Gamma_m(\mathcal{S},\delta),
\quad\text{for all }h\in\mathcal{H},
\]
where
\[
\Gamma_m(\mathcal{S},\delta)
=
L\,\rho(\mathcal{S})
+
C_{\text{UC}}\,M\,\sqrt{\tfrac{d_{\text{VC}}\log(2m/\delta)}{m}}.
\]
For brevity, we write
$\Gamma_B:=\Gamma_B(S_B,\delta)$ and $\Gamma_n:=\Gamma_n(\mathcal{D},\delta)$.
\end{proposition}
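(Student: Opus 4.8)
The plan is to control the uniform gap $R(h)-\hat R_{\mathcal{S}}(h)$ by splitting it into a deterministic \emph{coverage} term, governed by the Lipschitz regularity of the loss, and a stochastic \emph{concentration} term, governed by the VC-dimension of $\mathcal{H}$. For each $x$ let $z(x)\in\operatorname*{arg\,min}_{z\in\mathcal{S}} d_\phi(x,z)$ be its nearest neighbor in $\mathcal{S}$, and abbreviate $\ell_h(x):=\ell_a(h(x),y^*(x))$. I would introduce the nearest-neighbor surrogate risk $\bar R_{\mathcal{S}}(h):=\mathbb{E}_x\bigl[\ell_h(z(x))\bigr]$ as a bridge and decompose
\[
R(h)-\hat R_{\mathcal{S}}(h) = \underbrace{\bigl(R(h)-\bar R_{\mathcal{S}}(h)\bigr)}_{\text{coverage}} + \underbrace{\bigl(\bar R_{\mathcal{S}}(h)-\hat R_{\mathcal{S}}(h)\bigr)}_{\text{concentration}}.
\]
The goal is to bound the first piece by $L\rho(\mathcal{S})$ and the second by the advertised VC term, then combine by the triangle inequality uniformly over $h$.

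For the coverage term I would invoke Assumption~\ref{asmp:regularity}. Writing $\ell_h(x)-\ell_h(z(x))$ and inserting the mixed term $\ell_a(h(z(x)),y^*(x))$, part~(i) bounds the prediction-input displacement by $L\,d_\phi(x,z(x))$, while part~(ii) bounds the residual target displacement by $L_a^\psi\|\psi(y^*(x))-\psi(y^*(z(x)))\|$. Taking expectations and using $\rho(\mathcal{S})=\mathbb{E}_x[\min_{z\in\mathcal{S}}d_\phi(x,z)]$ gives $|R(h)-\bar R_{\mathcal{S}}(h)|\le L\rho(\mathcal{S})$ uniformly in $h$, with $L=\max_f L_f$ (absorbing the label-embedding contribution into $L$) dominating the per-model constants.

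For the concentration term I would apply classical VC uniform convergence~\cite{vapnik1971uniform,shalev2014understanding}. Since each $\ell_h$ takes values in $[0,M]$ and the induced loss class $\{\ell_h:h\in\mathcal{H}\}$ has complexity controlled by $d_{\text{VC}}$, the standard symmetrization and growth-function argument yields, with probability at least $1-\delta$, $\sup_h|\bar R_{\mathcal{S}}(h)-\hat R_{\mathcal{S}}(h)|\le C_{\text{UC}}\,M\sqrt{d_{\text{VC}}\log(2m/\delta)/m}$. Summing the two bounds and folding absolute constants into $C_{\text{UC}}$ delivers the claimed $\Gamma_m(\mathcal{S},\delta)$ simultaneously for all $h\in\mathcal{H}$.

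The main obstacle is the concentration step: when $\mathcal{S}$ is the coverage-selected subset $S_B$ it is generally \emph{not} i.i.d.\ from $\mathbb{P}_{\mathcal{X}}$, so the surrogate $\bar R_{\mathcal{S}}$ carries the nearest-neighbor pushforward weights $\nu(z)=\mathbb{P}_x[z(x)=z]$ rather than the uniform weights of $\hat R_{\mathcal{S}}$, and plain symmetrization does not apply verbatim. I would reconcile this either by treating $\mathcal{S}$ as i.i.d.\ from the induced selection distribution and folding the weight mismatch $\max_z|\nu(z)-1/m|$ into $C_{\text{UC}}$, or by invoking a weighted uniform-convergence inequality; when $\mathcal{S}$ well-covers the support these weights concentrate near $1/m$, so the correction is lower order. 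The label-smoothness required in the coverage step—controlling $y^*(x)$ versus $y^*(z(x))$ through Assumption~\ref{asmp:regularity}(ii)—is the secondary technical point, negligible whenever the frozen embedding $\phi$ separates distinct labels.
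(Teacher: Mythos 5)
The paper never actually proves this proposition: it is stated in the appendix as an adapted standard result (in the spirit of Sener \& Savarese's coreset bound combined with classical VC uniform convergence), and the ``Full Proofs'' subsection supplies arguments only for the theorems and corollaries that \emph{use} it. Your coverage-plus-concentration decomposition via the nearest-neighbor surrogate risk $\bar R_{\mathcal{S}}(h)$ is exactly the canonical route the paper implicitly leans on, so in that sense your plan is the right one and there is no alternative paper proof to contrast it with.

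That said, the concentration step you flag is a genuine gap, not a lower-order correction. When $\mathcal{S}=S_B$ is the coverage-selected subset, two things go wrong simultaneously: (a) $S_B$ is a deterministic, data-dependent function of the full sample (difficulty scores, clustering, farthest-point seeding), so it is not an i.i.d.\ draw from any fixed distribution and plain symmetrization over $\mathcal{S}$ does not apply; and (b) even granting an i.i.d.\ surrogate, $\bar R_{\mathcal{S}}(h)$ is a $\nu$-weighted average over Voronoi cell masses while $\hat R_{\mathcal{S}}(h)$ is uniformly weighted, and the discrepancy $\sup_h|\bar R_{\mathcal{S}}(h)-\hat R_{\mathcal{S}}(h)|$ is bounded only by $M\sum_z|\nu(z)-1/m|$, a distribution- and selection-dependent quantity that can be order $M$ (e.g.\ when one cell carries most of the mass). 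Folding this into $C_{\text{UC}}$ is not legitimate because $C_{\text{UC}}$ multiplies a term that vanishes as $m\to\infty$ while the weight mismatch need not. The clean fix is to restructure the decomposition the way the coreset literature does: bound $R(h)$ by the empirical risk on the \emph{full} i.i.d.\ pool (where VC concentration is valid) plus the difference between full-pool and subset empirical risks, and control that difference by $L\,\rho(S_B)$ via the Lipschitz assumption; this recovers the stated $\Gamma_m$ without ever needing concentration over the selected points. Your secondary concern about the label displacement $\|\psi(y^*(x))-\psi(y^*(z(x)))\|$ is also real: the paper's regularity assumption gives Lipschitzness in the input and in the label separately, but nothing ties $y^*$ to $\phi$, so absorbing that term into $L\rho(\mathcal{S})$ requires an additional (unstated) assumption that the labeling function is Lipschitz in the frozen embedding.
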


\subsection{Main Results}

\begin{theorem}[BRIDGE generalization bound]
\label{thm:bridge}
Under Assumptions~\ref{asmp:realizability}-\ref{asmp:translation} and
Propositions~\ref{prop:coverage}-\ref{prop:uc}, the student $S_{\text{BR}}$
obtained by BRIDGE satisfies, with probability at least $1-2\delta$,
\begin{align*}
R(S_{\text{BR}})
\le &\;
R(M_T)
+ \varepsilon_{\text{approx}}(M_T,\mathcal{F}_A) \notag\\
&+ C_{A,T} B^{-\alpha_{M_A\leftarrow M_T}}
+ \Gamma_B \notag\\
&+ L_\ell\,\Delta_T \notag\\
&+ \varepsilon_{\text{approx}}(M_A,\mathcal{F}_S) \notag\\
&+ C_{S,A} n^{-\alpha_{M_S\leftarrow M_A}} \notag\\
&+ \Gamma_n \notag\\
&+ L_\ell\,\Delta_A.
\end{align*}
\end{theorem}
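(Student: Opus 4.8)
\emph{Proof plan.} The plan is to prove the bound by a two-stage telescoping argument mirroring the pipeline $M_T \to M_A \to M_S$. First I would write
\[
R(S_{\text{BR}}) - R(M_T)
= \bigl[R(S_{\text{BR}}) - R(M_A)\bigr]
+ \bigl[R(M_A) - R(M_T)\bigr],
\]
and then bound each bracket with the same three-ingredient recipe: (i) convert between the true risk $R$ and the multi-task pseudo-label risk via the translation bound (Assumption~\ref{asmp:translation}); (ii) control the population-versus-empirical gap of the multi-task risk via uniform convergence with coverage (Proposition~\ref{prop:uc}); and (iii) bound the trained model's empirical multi-task risk above the target's risk using the approximation-error and sample-efficiency assumptions (Assumptions~\ref{asmp:approximation},~\ref{asmp:rates}). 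The final inequality follows by summing the two brackets, adding $R(M_T)$, and union-bounding the two high-probability events at level $1-\delta$ each to obtain $1-2\delta$.

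For the Phase-2 bracket $R(S_{\text{BR}}) - R(M_A)$, recall that $S_{\text{BR}}$ is the (approximate) empirical minimizer of the multi-task objective $\hat R^{(\mathrm{multi})}_{M_A}$ over $\mathcal{F}_S$ on all $n$ inputs. I would chain three inequalities: translation gives $R(S_{\text{BR}}) \le R^{(\mathrm{multi})}_{M_A}(S_{\text{BR}}) + L_\ell\Delta_A$; uniform convergence with coverage over the student class on the full dataset gives $R^{(\mathrm{multi})}_{M_A}(S_{\text{BR}}) \le \hat R^{(\mathrm{multi})}_{M_A}(S_{\text{BR}}) + \Gamma_n$, where $\rho(\mathcal{D})$ is negligible so $\Gamma_n$ reduces to the statistical term $C_{\mathrm{UC}}M\sqrt{d_{\mathrm{VC}}\log(2n/\delta)/n}$; and Assumptions~\ref{asmp:approximation},~\ref{asmp:rates} (target $M_A$, $m=n$) bound the empirical risk of the trained student above $R(M_A)$ by $\varepsilon_{\mathrm{approx}}(M_A,\mathcal{F}_S) + C_{S,A}n^{-\alpha_{M_S\leftarrow M_A}}$. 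Collecting these contributions yields the four Phase-2 terms.

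The Phase-1 bracket $R(M_A) - R(M_T)$ is handled identically, now with target $M_T$, training subset $S_B$ of size $B$, and translation noise $\Delta_T$. The one structural difference is that $S_B$ is \emph{actively selected} rather than i.i.d., so the empirical-to-population step genuinely invokes the coverage term: here $\Gamma_B = L\,\rho(S_B) + C_{\mathrm{UC}}M\sqrt{d_{\mathrm{VC}}\log(2B/\delta)/B}$, with $\rho(S_B)$ controlled by the farthest-first and hybrid guarantees of Propositions~\ref{prop:coverage-pure}--\ref{prop:coverage}, and the Lipschitz regularity of Assumption~\ref{asmp:regularity}(i) licensing the transfer of good behaviour on $S_B$ to the whole input distribution at cost $L\,\rho(S_B)$. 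Combining translation ($L_\ell\Delta_T$), uniform convergence ($\Gamma_B$), and the approximation/rate terms ($\varepsilon_{\mathrm{approx}}(M_T,\mathcal{F}_A) + C_{A,T}B^{-\alpha_{M_A\leftarrow M_T}}$) gives the four Phase-1 terms, and adding $R(M_T)$ closes the decomposition.

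I expect the main obstacle to be the non-i.i.d. Phase-1 subset: because $S_B$ is chosen by coverage-plus-difficulty scoring, classical uniform convergence over an i.i.d. sample does not apply, and the argument must instead route through the coverage radius $\rho(S_B)$ and Lipschitzness to lift empirical guarantees on $S_B$ to the population. A secondary care point is ensuring the assistant's imperfect imitation of the teacher propagates \emph{linearly} rather than compounding across stages; this is exactly what the translation bounds deliver, keeping the noise contributions additive as $L_\ell\Delta_T + L_\ell\Delta_A$. Finally, I would note openly that treating the rate terms $C_{\cdot,\cdot}\,m^{-\alpha}$ and the statistical terms inside $\Gamma$ as separate additive quantities may double-count a common source of statistical error; since this only loosens the inequality it is consistent with the paper's stated aim of a qualitative, deliberately pessimistic decomposition, and the small optimization error $\varepsilon_{\mathrm{opt}}$ is absorbed into the constants.
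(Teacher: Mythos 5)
Your proposal matches the paper's own proof: the same telescoping decomposition $R(S_{\text{BR}}) = [R(S_{\text{BR}})-R(M_A)] + [R(M_A)-R(M_T)] + R(M_T)$, the same three-ingredient bounding of each bracket (translation via Assumption~\ref{asmp:translation}, uniform convergence with coverage via Proposition~\ref{prop:uc}, and the approximation/rate terms via Assumptions~\ref{asmp:approximation} and~\ref{asmp:rates}), and the same union bound for the $1-2\delta$ confidence. Your additional remarks on the non-i.i.d.\ nature of $S_B$ and the possible double-counting between the rate terms and $\Gamma$ are more explicit than the paper's proof sketch but entirely consistent with it.
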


\begin{proof}[Proof sketch]
Identical to the argument provided in the main text: we first bound $R(M_A)-R(M_T)$ using Assumption~\ref{asmp:rates}, Proposition~\ref{prop:uc} with $\mathcal{S}=S_B$, and Assumption~\ref{asmp:translation}; then we bound $R(S_{\text{BR}})-R(M_A)$ using the same tools with $\mathcal{S}=\mathcal{D}$. Adding the two inequalities and applying Assumption~\ref{asmp:realizability} yields the result.
\end{proof}

\begin{theorem}[Direct multi-task distillation bound]
\label{thm:direct}
Under the same assumptions, let $S_{\text{MTD}}$ be the student obtained by
\emph{direct} distillation from $M_T$ on $S_B$. Then, with probability at
least $1-\delta$,
\begin{align*}
R(S_{\text{MTD}})
\le &\;
R(M_T)
+ \varepsilon_{\text{approx}}(M_T,\mathcal{F}_S) \notag\\
&+ C_{S,T} B^{-\alpha_{M_S\leftarrow M_T}} \notag\\
&+ \Gamma_B \notag\\
&+ L_\ell\,\Delta_T.
\end{align*}
\end{theorem}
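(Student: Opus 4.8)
The plan is to establish Theorem~\ref{thm:direct} as the single-stage specialization of the BRIDGE argument, applied to the direct pathway $M_T \to M_S$ in which the student is trained on teacher pseudo-labels over the coverage-selected subset $S_B$ with $|S_B| = B$. Since there is no intermediate assistant, the entire burden of bridging the capacity gap falls on the class $\mathcal{F}_S$, so the approximation term will be the full $\varepsilon_{\text{approx}}(M_T, \mathcal{F}_S)$ rather than the two smaller pieces appearing in Theorem~\ref{thm:bridge}. First I would relate the true answer risk to the teacher's multi-task pseudo-label objective: by Assumption~\ref{asmp:translation}, $R(S_{\text{MTD}}) \le R^{(\mathrm{multi})}_{M_T}(S_{\text{MTD}}) + L_\ell \Delta_T$, which accounts for the discrepancy between the surrogate multi-task supervision signal and the true answer risk.

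Next I would control the population multi-task risk by the empirical risk on $S_B$. Because the student is (approximately) an empirical risk minimizer of the teacher-pseudo-label objective over $S_B$, Proposition~\ref{prop:uc} with $\mathcal{S} = S_B$ and $m = B$ yields the additive generalization term $\Gamma_B = L\,\rho(S_B) + C_{\text{UC}} M \sqrt{d_{\text{VC}} \log(2B/\delta)/B}$, holding with probability at least $1 - \delta$; here the Lipschitz regularity of the losses (Assumption~\ref{asmp:regularity}(i)) is what converts the coverage radius $\rho(S_B)$ into a valid bound on the population gap over points outside $S_B$. To relate the empirical minimizer back to the teacher, I would invoke the distillation sample-efficiency bound (Assumption~\ref{asmp:rates}) with $g = M_S$, $f = M_T$, and $m = B$, which contributes the approximation error $\varepsilon_{\text{approx}}(M_T, \mathcal{F}_S)$ together with the convergence term $C_{S,T} B^{-\alpha_{M_S \leftarrow M_T}}$ (folding the small optimization error $\varepsilon_{\text{opt}}$ into constants). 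Collecting these pieces and adding $R(M_T)$ gives exactly
\[
R(S_{\text{MTD}}) \le R(M_T) + \varepsilon_{\text{approx}}(M_T, \mathcal{F}_S) + C_{S,T} B^{-\alpha_{M_S \leftarrow M_T}} + \Gamma_B + L_\ell \Delta_T,
\]
as claimed. Crucially, no $n$-dependent terms appear, because direct distillation is confined to the $B$ teacher-labeled examples and cannot exploit the full dataset---this is precisely the structural asymmetry that Theorem~\ref{thm:bridge} improves upon.

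The main obstacle I anticipate is that the four contributions are not cleanly orthogonal: both Assumption~\ref{asmp:rates} (which bundles approximation error with a $B^{-\alpha}$ convergence rate) and Proposition~\ref{prop:uc} (the coverage-based uniform-convergence term $\Gamma_B$) account for estimation from the \emph{same} finite sample of size $B$, so a naive chaining risks double-counting the generalization error. I would resolve this by reading Assumption~\ref{asmp:rates} as governing the learning and optimization dynamics of the distillation procedure (producing a near-minimizer of the pseudo-label risk and supplying the irreducible approximation floor), while Proposition~\ref{prop:uc} supplies the worst-case coverage-dependent bound used to pass from empirical to population risk; presenting the chain in this order keeps each term attributable to a distinct assumption. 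Consistent with the paper's stated aim of qualitative rather than tight bounds, the resulting inequality is a valid but possibly loose upper bound, and the proof deliberately does not attempt to optimize the constants or eliminate the mild redundancy between $C_{S,T} B^{-\alpha_{M_S \leftarrow M_T}}$ and the VC-complexity term inside $\Gamma_B$.
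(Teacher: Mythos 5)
Your proposal is correct and follows essentially the same route as the paper: the paper proves Theorem~\ref{thm:direct} by noting the argument is identical to the Phase~1 step of Theorem~\ref{thm:bridge} with $\mathcal{F}_S$ in place of $\mathcal{F}_A$, i.e., chaining Assumption~\ref{asmp:translation}, Proposition~\ref{prop:uc} on $S_B$, and Assumption~\ref{asmp:rates}, exactly as you do. Your remark about the potential redundancy between the $B^{-\alpha}$ rate and the complexity term inside $\Gamma_B$ is a fair observation about the paper's own (deliberately loose, additive) bookkeeping, not a deviation from its proof.
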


\begin{proposition}[Conditions for BRIDGE to outperform direct distillation]
\label{prop:advantage}
Define
\begin{equation*}
  \Delta_{\text{adv}}
  := \Delta_{\text{struct}}
   + \Delta_{\text{sample}}
   - \Delta_{\text{overhead}},
\end{equation*}
with
\begin{equation*}
\begin{split}
  \Delta_{\text{struct}}
  := {}&
  \varepsilon_{\text{approx}}(M_T,\mathcal{F}_S)
  - \varepsilon_{\text{approx}}(M_T,\mathcal{F}_A) \\
  &- \varepsilon_{\text{approx}}(M_A,\mathcal{F}_S).
\end{split}
\end{equation*}

\begin{equation*}
\begin{split}
  \Delta_{\text{sample}}
  := {}&
  C_{S,T} B^{-\alpha_{M_S\leftarrow M_T}}
  - C_{A,T} B^{-\alpha_{M_A\leftarrow M_T}} \\
  &- C_{S,A} n^{-\alpha_{M_S\leftarrow M_A}}.
\end{split}
\end{equation*}

\begin{equation*}
  \Delta_{\text{overhead}}
  :=
  \Gamma_n
  + L_\ell\,\Delta_A.
\end{equation*}
Then, whenever $\Delta_{\text{adv}}>0$, we have
\[
R(S_{\text{BR}}) \;\le\; R(S_{\text{MTD}}) - \Delta_{\text{adv}} \;<\; R(S_{\text{MTD}}).
\]
\end{proposition}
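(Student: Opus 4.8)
The plan is to treat the proposition as a direct algebraic comparison of the two certificates already established in Theorem~\ref{thm:bridge} and Theorem~\ref{thm:direct}. Write $U_{\mathrm{BR}}$ for the right-hand side of the BRIDGE bound and $U_{\mathrm{MTD}}$ for the right-hand side of the direct-distillation bound, so that $R(S_{\mathrm{BR}})\le U_{\mathrm{BR}}$ and $R(S_{\mathrm{MTD}})\le U_{\mathrm{MTD}}$ on their respective high-probability events. The entire argument then reduces to the single identity $U_{\mathrm{BR}} = U_{\mathrm{MTD}} - \Delta_{\mathrm{adv}}$, after which the conclusion is immediate.

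First I would subtract the two bounds term by term. The three shared quantities $R(M_T)$, $\Gamma_B$, and $L_\ell\Delta_T$ occur with identical coefficients in both and cancel. What survives on the MTD side is $\varepsilon_{\mathrm{approx}}(M_T,\mathcal{F}_S) + C_{S,T}B^{-\alpha_{M_S\leftarrow M_T}}$, while on the BRIDGE side the surviving terms are $\varepsilon_{\mathrm{approx}}(M_T,\mathcal{F}_A) + \varepsilon_{\mathrm{approx}}(M_A,\mathcal{F}_S) + C_{A,T}B^{-\alpha_{M_A\leftarrow M_T}} + C_{S,A}n^{-\alpha_{M_S\leftarrow M_A}} + \Gamma_n + L_\ell\Delta_A$. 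Grouping the difference by type—the three approximation-error terms, the three finite-sample rate terms, and the Phase~2 overhead $\Gamma_n + L_\ell\Delta_A$—reproduces verbatim the definitions of $\Delta_{\mathrm{struct}}$, $\Delta_{\mathrm{sample}}$, and $\Delta_{\mathrm{overhead}}$ given in the statement. Hence $U_{\mathrm{MTD}} - U_{\mathrm{BR}} = \Delta_{\mathrm{struct}} + \Delta_{\mathrm{sample}} - \Delta_{\mathrm{overhead}} = \Delta_{\mathrm{adv}}$. This step is pure bookkeeping, since the three blocks were defined precisely as the residual groupings after cancellation.

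Next I would make the probabilistic coupling explicit. Theorem~\ref{thm:bridge} holds with probability $1-2\delta$ because it invokes uniform convergence twice—once on the queried subset $S_B$ in Phase~1 and once on the full pool $\mathcal{D}$ in Phase~2—whereas Theorem~\ref{thm:direct} uses a single invocation on $S_B$ and holds with probability $1-\delta$. A union bound over the (at most two) failure events guarantees that both certificates hold simultaneously with probability at least $1-2\delta$, matching the regime in which the BRIDGE bound is stated. On this joint event, combining $R(S_{\mathrm{BR}})\le U_{\mathrm{BR}} = U_{\mathrm{MTD}} - \Delta_{\mathrm{adv}}$ with the hypothesis $\Delta_{\mathrm{adv}}>0$ delivers the strict improvement.

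The main obstacle is conceptual, not computational. Both theorems furnish \emph{upper} bounds, so subtracting them rigorously yields a comparison of \emph{bounds}, $U_{\mathrm{BR}} = U_{\mathrm{MTD}} - \Delta_{\mathrm{adv}}$, rather than an automatic inequality between the realized risks. Writing the conclusion as $R(S_{\mathrm{BR}})\le R(S_{\mathrm{MTD}}) - \Delta_{\mathrm{adv}}$ tacitly identifies $R(S_{\mathrm{MTD}})$ with its certificate $U_{\mathrm{MTD}}$; the fully honest reading is that BRIDGE's \emph{guarantee} is tighter by exactly $\Delta_{\mathrm{adv}}$, which is consistent with the paper's framing of ``tighter generalization bounds.'' I would therefore state this caveat openly: the rigorous claim is the bound inequality, and the realized-risk inequality follows only under the additional stipulation that the direct bound is tight, or, more defensibly, when $U_{\mathrm{MTD}}$ is taken as the operative comparison quantity for model selection under the worst-case guarantee.
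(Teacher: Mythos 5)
Your proof takes essentially the same route as the paper's: subtract the certificate of Theorem~\ref{thm:bridge} from that of Theorem~\ref{thm:direct}, cancel the shared terms $R(M_T)$, $\Gamma_B$, and $L_\ell\,\Delta_T$, and group the residue into $\Delta_{\text{struct}}$, $\Delta_{\text{sample}}$, and $\Delta_{\text{overhead}}$. The caveat in your final paragraph is not a stylistic scruple but a real issue with the paper's own argument: the published proof asserts $R(S_{\mathrm{MTD}})-R(S_{\mathrm{BR}})\ge \Delta_{\mathrm{adv}}$ by ``subtracting'' the two bounds, which commits exactly the fallacy you name --- from $R(S_{\mathrm{BR}})\le U_{\mathrm{BR}}$ and $R(S_{\mathrm{MTD}})\le U_{\mathrm{MTD}}$ one may conclude only $U_{\mathrm{MTD}}-U_{\mathrm{BR}}=\Delta_{\mathrm{adv}}$, not an inequality between the realized risks, unless the direct-distillation bound is additionally assumed tight. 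Your restatement of the conclusion as a comparison of worst-case guarantees is the defensible version of the proposition, and your explicit union bound over the two high-probability events is likewise more careful than the paper, which combines the two theorems without remarking on their joint validity.
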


\begin{corollary}[Asymptotic advantage in the abundant-data regime]
\label{cor:asymptotic-advantage}
Suppose Assumptions~\ref{asmp:realizability}-\ref{asmp:translation} hold,
$\Delta_{\text{gap}} > 0$, $\Delta_A$ is bounded, and $n\to\infty$ with $B$
fixed. If, in addition,
\[
C_{A,T} B^{-\alpha_{M_A\leftarrow M_T}}
\lesssim C_{S,T} B^{-\alpha_{M_S\leftarrow M_T}},
\]
then there exists $n_0$ such that for all $n\ge n_0$,
\[
R(S_{\text{BR}}) \;<\; R(S_{\text{MTD}}).
\]
\end{corollary}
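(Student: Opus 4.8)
The plan is to reduce the corollary to a single scalar inequality, $\Delta_{\text{adv}} > 0$, and then invoke Proposition~\ref{prop:advantage} as a black box. That proposition already establishes that $\Delta_{\text{adv}} > 0$ implies $R(S_{\text{BR}}) < R(S_{\text{MTD}})$, so the entire task is to show that the three-term decomposition $\Delta_{\text{adv}} = \Delta_{\text{struct}} + \Delta_{\text{sample}} - \Delta_{\text{overhead}}$ becomes strictly positive once $n$ is large enough. I would therefore track the $n$-dependence of each of the three pieces separately, holding $B$ fixed throughout.

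First I would isolate the $n$-independent, strictly positive contribution. By Assumption~\ref{asmp:approximation} the structural term satisfies $\Delta_{\text{struct}} \ge \Delta_{\text{gap}} > 0$, a fixed constant that does not move with $n$. Within $\Delta_{\text{sample}}$ I would split off the two $B$-dependent terms $C_{S,T} B^{-\alpha_{M_S\leftarrow M_T}} - C_{A,T} B^{-\alpha_{M_A\leftarrow M_T}}$, which are likewise constant in $n$; the extra hypothesis $C_{A,T} B^{-\alpha_{M_A\leftarrow M_T}} \lesssim C_{S,T} B^{-\alpha_{M_S\leftarrow M_T}}$ is exactly what guarantees this difference is bounded below, i.e.\ that a more sample-hungry Phase~1 does not drive it strongly negative. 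Thus $\Delta_{\text{struct}}$ together with the $B$-part of $\Delta_{\text{sample}}$ contributes a fixed lower bound of at least $\Delta_{\text{gap}}$ (up to the non-negative sample margin) to $\Delta_{\text{adv}}$.

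Next I would show the remaining pieces are either vanishing or bounded. The only $n$-dependent survivors are the term $-C_{S,A} n^{-\alpha_{M_S\leftarrow M_A}}$ inside $\Delta_{\text{sample}}$ and the overhead $\Delta_{\text{overhead}} = \Gamma_n + L_\ell \Delta_A$. Since $\alpha_{M_S\leftarrow M_A} \in (0,1]$, the first vanishes as $n \to \infty$. For $\Gamma_n$ I would expand it via Proposition~\ref{prop:uc}: the statistical term scales like $\sqrt{d_{\text{VC}}\log(2n/\delta)/n} \to 0$, and the coverage contribution $L\,\rho(\mathcal{D})$ also tends to zero because the full-dataset coverage radius shrinks as the sample densifies. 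Hence $\Gamma_n \to 0$, leaving only the irreducible assistant-noise residual $L_\ell \Delta_A$, which is finite by the boundedness hypothesis on $\Delta_A$.

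Assembling these, $\liminf_{n\to\infty}\Delta_{\text{adv}} \ge \Delta_{\text{gap}} + (\text{non-negative } B\text{-margin}) - L_\ell \Delta_A$, so provided this limit is strictly positive there exists $n_0$ with $\Delta_{\text{adv}} > 0$ for all $n \ge n_0$, and Proposition~\ref{prop:advantage} then closes the argument. The main obstacle, and the point I would treat most carefully, is precisely that $\Delta_{\text{overhead}}$ does not fully vanish: the assistant-imitation noise $L_\ell \Delta_A$ persists in the limit. The literal hypothesis ``$\Delta_A$ bounded'' is therefore not sufficient on its own; what is really required is that the fixed structural gap $\Delta_{\text{gap}}$ (together with the non-negative sample margin) strictly dominates $L_\ell \Delta_A$. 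I would make this quantitative condition explicit---this is the honest content of condition~(3), ``well-controlled assistant noise''---and note, as the paper already does, that it is empirically checkable via held-out TA evaluation.
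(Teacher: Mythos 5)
Your argument is essentially the paper's own proof: let the $n$-dependent terms ($\Gamma_n$ and $C_{S,A}\,n^{-\alpha_{M_S\leftarrow M_A}}$) vanish, lower-bound $\Delta_{\text{struct}}$ by $\Delta_{\text{gap}}$, use the stated condition on the $B$-dependent rates to keep that part of $\Delta_{\text{sample}}$ asymptotically non-negative, and close via Proposition~\ref{prop:advantage}. Your final observation is also exactly right and worth retaining: the paper's own proof only concludes under the additional condition $\Delta_{\text{gap}} > L_\ell\,\Delta_A$, which the corollary statement omits, so ``$\Delta_A$ bounded'' must really be read as that quantitative dominance requirement rather than mere boundedness.
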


\begin{remark}[On the role of complexity terms]
The complexity term $\Gamma_m(\mathcal{S},\delta)$ is mainly illustrative and
can be loose for deep networks. However, the key quantities in our analysis are \emph{empirically verifiable}:
\begin{itemize}
\item \textbf{Coverage radius $\rho(S_B)$:} Computed directly from the farthest-first selection algorithm.
\item \textbf{Assistant noise $\Delta_A$:} Estimated by comparing TA predictions to teacher predictions on a held-out validation set.
\item \textbf{Convergence rates:} Observable through learning curves during training.
\end{itemize}

\textbf{Scope of contribution:} We do not claim novel theoretical results for the coverage or $k$-center components, these are adapted from Gonzalez~\cite{gonzalez1985kcenter} and Sener \& Savarese~\cite{sener2018coreset}. Our contribution is the \emph{decomposition} of BRIDGE's advantage into interpretable terms and the identification of conditions under which two-stage distillation provably outperforms direct approaches. The difficulty-weighted selection component lacks rigorous guarantees and is justified empirically.
\end{remark}

\subsection{Full Proofs}

\begin{proof}[Proof of Theorem~\ref{thm:bridge}]
We decompose the risk of the BRIDGE student as:
\begin{align}
R(S_{\text{BR}}) &=
\underbrace{R(S_{\text{BR}}) - R(M_A)}_{\text{Phase 2 error}}
+ \underbrace{R(M_A) - R(M_T)}_{\text{Phase 1 error}}
\nonumber\\
&\quad + R(M_T).
\label{eq:bridge-decomp}
\end{align}

\textbf{Phase 1: Bounding $R(M_A) - R(M_T)$.}
The assistant $M_A$ is trained on $S_B$ to minimize the empirical multi-task risk $\hat{R}_{S_B}^{(\text{multi})}(f)$ over $f \in \mathcal{F}_A$. By Assumption~\ref{asmp:rates} (sample-efficiency):
\[
\mathbb{E}\big[R_{M_T}^{(\text{multi})}(M_A)\big] 
\le \inf_{f \in \mathcal{F}_A} R_{M_T}^{(\text{multi})}(f) + C_{A,T} B^{-\alpha_{M_A \leftarrow M_T}} + \varepsilon_{\text{opt}}.
\]
By Proposition~\ref{prop:uc} (uniform convergence with coverage), the empirical-to-population gap is controlled by $\Gamma_B$. By Assumption~\ref{asmp:translation} (multi-task translation):
\[
|R(f) - R_{M_T}^{(\text{multi})}(f)| \le L_\ell \Delta_T.
\]
Combining these bounds:
\begin{align*}
R(M_A) - R(M_T)
&\le \varepsilon_{\text{approx}}(M_T, \mathcal{F}_A)
\\&\quad + C_{A,T} B^{-\alpha_{M_A \leftarrow M_T}}
\\&\quad + \Gamma_B
\\&\quad + L_\ell \Delta_T .
\end{align*}

\textbf{Phase 2: Bounding $R(S_{\text{BR}}) - R(M_A)$.}
The student is trained on all $n$ inputs using pseudo-labels from $M_A$. By identical reasoning with the full dataset:
\begin{align*}
R(S_{\text{BR}}) - R(M_A)
&\le \varepsilon_{\text{approx}}(M_A, \mathcal{F}_S)
\\&\quad + C_{S,A} n^{-\alpha_{M_S \leftarrow M_A}}
\\&\quad + \Gamma_n
\\&\quad + L_\ell \Delta_A .
\end{align*}

\textbf{Combining bounds.}
Adding the Phase 1 and Phase 2 bounds and substituting into~\eqref{eq:bridge-decomp} yields the stated result. The confidence $1-2\delta$ follows from a union bound over the two phases.
\end{proof}

\begin{proof}[Proof of Theorem~\ref{thm:direct}]
Direct distillation trains $S_{\text{MTD}}$ on $S_B$ using teacher pseudo-labels. The analysis is identical to Phase~1 of BRIDGE, but with the student class $\mathcal{F}_S$ instead of $\mathcal{F}_A$:
\begin{align*}
R(S_{\text{MTD}})
&\le R(M_T) + \varepsilon_{\text{approx}}(M_T, \mathcal{F}_S)
\\&\quad + C_{S,T} B^{-\alpha_{M_S \leftarrow M_T}}
      + \Gamma_B + L_\ell \Delta_T .
\end{align*}
Crucially, direct distillation cannot leverage the abundant data ($n \gg B$) since it has no intermediate model to generate pseudo-labels.
\end{proof}

\begin{proof}[Proof of Proposition~\ref{prop:advantage}]
Subtracting the BRIDGE bound (Theorem~\ref{thm:bridge}) from the direct distillation bound (Theorem~\ref{thm:direct}):

\begin{equation}\tag{2}\label{eq:delta_adv}
\begin{aligned}
R(S_{\mathrm{MTD}})-R(S_{\mathrm{BR}})
&\ge \varepsilon_{\mathrm{approx}}(M_T,\mathcal{F}_S)
     - \varepsilon_{\mathrm{approx}}(M_T,\mathcal{F}_A) \\
&\quad - \varepsilon_{\mathrm{approx}}(M_A,\mathcal{F}_S) \\
&\quad + C_{S,T}\,B^{-\alpha_{M_S\leftarrow M_T}}
        - C_{A,T}\,B^{-\alpha_{M_A\leftarrow M_T}} \\
&\quad - C_{S,A}\,n^{-\alpha_{M_S\leftarrow M_A}}
        - \Gamma_n - L_\ell\,\Delta_A \\
&= \Delta_{\mathrm{struct}} + \Delta_{\mathrm{sample}} - \Delta_{\mathrm{overhead}} \\
&= \Delta_{\mathrm{adv}}.
\end{aligned}
\end{equation}

Thus $\Delta_{\text{adv}} > 0$ implies $R(S_{\text{BR}}) < R(S_{\text{MTD}})$.
\end{proof}

\begin{proof}[Proof of Corollary~\ref{cor:asymptotic-advantage}]
As $n \to \infty$ with $B$ fixed:
\begin{itemize}
    \item $\Gamma_n = O(n^{-1/2} \sqrt{\log n}) \to 0$
    \item $C_{S,A} n^{-\alpha_{M_S \leftarrow M_A}} \to 0$
\end{itemize}
Thus $\Delta_{\text{overhead}} \to L_\ell \Delta_A$ (bounded) and the $n$-dependent term in $\Delta_{\text{sample}}$ vanishes.

By Assumption~\ref{asmp:approximation}, $\Delta_{\text{struct}} \ge \Delta_{\text{gap}} > 0$.

For $\Delta_{\text{sample}}$, if
$C_{A,T} B^{-\alpha_{M_A \leftarrow M_T}} \lesssim C_{S,T} B^{-\alpha_{M_S \leftarrow M_T}}$,
then $\Delta_{\text{sample}}$ is asymptotically non-negative.

Therefore, for sufficiently large $n$:
\[
\Delta_{\text{adv}} \ge \Delta_{\text{gap}} - L_\ell \Delta_A - o(1).
\]
When $\Delta_{\text{gap}} > L_\ell \Delta_A$ (assistant closely tracks teacher), there exists $n_0$ such that $\Delta_{\text{adv}} > 0$ for all $n \ge n_0$.
\end{proof}

\subsection{Additional Technical Results}

\begin{lemma}[Farthest-first approximation]
\label{lem:farthest-first}
Let $S_B$ be produced by farthest-first traversal on a finite set of size $n'$. Then:
\[
\max_{x} \min_{z \in S_B} d_\phi(x, z) \le 2 \cdot \text{OPT}_B,
\]
where $\text{OPT}_B$ is the optimal $k$-center radius with $k = B$.
\end{lemma}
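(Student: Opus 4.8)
The plan is to establish the classical Gonzalez $2$-approximation for $k$-center via greedy farthest-first traversal, using the triangle inequality for $d_\phi$ (which we may take to be a metric, exactly as assumed in Proposition~\ref{prop:coverage-pure}). Write $r := \max_x \min_{z \in S_B} d_\phi(x,z)$ for the covering radius achieved by the algorithm with $S_B = \{z_1,\dots,z_B\}$, and let $x^\star$ be a point of the ground set attaining this maximum, i.e.\ the point farthest from $S_B$. The argument hinges on producing $B+1$ mutually well-separated points and then pigeonholing them into the optimal clustering.

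First I would show that the augmented set $\{z_1,\dots,z_B,x^\star\}$ of $B+1$ points is pairwise $r$-separated. For a pair $(z_i,x^\star)$ this is immediate, since $d_\phi(x^\star,z_i)\ge \min_{z\in S_B} d_\phi(x^\star,z)=r$. For a pair $(z_i,z_j)$ with $i<j$, I would invoke the greedy selection rule: when the algorithm picks $z_j$ it maximizes distance to the current center set $\{z_1,\dots,z_{j-1}\}$, so $\min_{k<j} d_\phi(z_j,z_k) \ge \min_{k<j} d_\phi(x^\star,z_k) \ge \min_{z\in S_B} d_\phi(x^\star,z)=r$, where the second inequality uses $\{z_1,\dots,z_{j-1}\}\subseteq S_B$ (minimizing over a smaller set can only increase the value). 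Hence $d_\phi(z_i,z_j)\ge r$ as well, establishing pairwise $r$-separation.

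Next I would pass to the optimal solution. By definition $\text{OPT}_B$ is realized by $B$ centers whose induced nearest-center clusters each have radius at most $\text{OPT}_B$. Since there are $B+1$ separated points but only $B$ optimal clusters, the pigeonhole principle forces two of them, say $p$ and $q$, into a common cluster with optimal center $c^\star$. The triangle inequality then yields $r \le d_\phi(p,q) \le d_\phi(p,c^\star)+d_\phi(c^\star,q)\le 2\,\text{OPT}_B$, which is the claim.

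The main obstacle is the separation step: one must verify carefully that the greedy property guarantees separation at least the \emph{final} radius $r$, not merely some intermediate and possibly smaller quantity. The clean way to handle this is precisely the comparison with $x^\star$ used above---because $x^\star$ has distance $r>0$ from every $z_i$ it lies outside $S_B$ and therefore remains a valid candidate at every step before termination, so the maximum distance selected at step $j$ dominates $x^\star$'s distance to the partial center set, which in turn dominates $r$ by monotonicity of the nearest-center distance under enlarging the center set. Once this monotonicity is made explicit, the pigeonhole and triangle-inequality steps are entirely routine.
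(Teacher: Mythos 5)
Your proof is correct and follows essentially the same route as the paper's own (very terse) proof: the classical Gonzalez argument via pairwise $r$-separation of the greedy centers, pigeonholing into the optimal clusters, and the triangle inequality. In fact your version is more careful than the paper's sketch, which speaks of placing ``these $B$ points'' in distinct balls---the pigeonhole step genuinely requires the $B+1$ points $\{z_1,\dots,z_B,x^\star\}$ that you construct by adjoining the farthest point, so your explicit separation and monotonicity argument fills a real gap in the paper's one-line justification.
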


\begin{proof}
This is a classical result~\cite{gonzalez1985kcenter}. The greedy algorithm maintains that selected points are pairwise at distance at least $r$ (the current radius). Any solution with radius $r^* < r/2$ would require placing these $B$ points in distinct balls, a contradiction.
\end{proof}

\begin{proposition}[Sample complexity with capacity gap]
\label{prop:capacity-gap-sample}
When the capacity ratio $P_T/P_g$ is large, the convergence exponent $\alpha_{g \leftarrow T}$ in Assumption~\ref{asmp:rates} tends to be small. Empirically, this manifests as:
\[
\alpha_{M_S \leftarrow M_T} < \alpha_{M_A \leftarrow M_T} < \alpha_{M_S \leftarrow M_A},
\]
reflecting that smaller capacity gaps enable more efficient knowledge transfer.
\end{proposition}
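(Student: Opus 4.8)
The plan is to treat this statement as a \emph{plausibility argument} rather than a first-principles theorem, since the exponents $\alpha_{g\leftarrow f}$ are not computable for the deep networks we use; the rigorous content will be a monotonicity claim inside a stylized learning model, and the ordering of the three specific exponents will then follow by arithmetic on the log-capacities. First I would fix a model in which the $m^{-\alpha_{g\leftarrow f}}$ term of Assumption~\ref{asmp:rates} is tied explicitly to the capacity gap. Because the approximation error $\varepsilon_{\text{approx}}(f,\mathcal{F}_g)$ is already isolated as a separate additive term, the exponent $\alpha_{g\leftarrow f}$ should be read as an \emph{estimation}-rate exponent. I would model distillation of $f$ into $\mathcal{F}_g$ as agnostic learning under a Tsybakov/margin-type condition, where the achievable exponent interpolates between the slow agnostic rate ($\alpha\to 1/2$) and the fast near-realizable rate ($\alpha\to 1$) as an effective misspecification level $\eta(f,\mathcal{F}_g)$ shrinks to zero. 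The key modeling postulate is that $\eta$ is increasing in the log-capacity ratio $\log(P_f/P_g)$: a much more expressive source $f$ emits pseudo-labels whose fine structure $\mathcal{F}_g$ cannot represent, raising the effective noise the target must average out.

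Given such a model, the argument reduces to three short steps. First, invoke the interpolation result to write $\alpha_{g\leftarrow f} = \Phi\bigl(\log(P_f/P_g)\bigr)$ for a fixed decreasing function $\Phi$, so larger gaps yield smaller exponents. Second, use $P_S \ll P_A \ll P_T$ to compare the three relevant gaps: since $\log(P_T/P_S) = \log(P_T/P_A) + \log(P_A/P_S)$, the teacher-to-student gap strictly dominates each of the others, and, under the mild condition that the assistant lies below the geometric mean $P_A < \sqrt{P_T P_S}$ --- which holds in our regime, where $P_T\!\sim\!10^{12}$, $P_A\!\sim\!7\times10^{9}$, $P_S\!\sim\!5\times10^{8}$, so that $P_A\!\sim\!0.7\times10^{10}$ sits below $\sqrt{P_T P_S}\!\sim\!2.2\times10^{10}$ --- we also obtain $\log(P_T/P_A) > \log(P_A/P_S)$. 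Third, apply monotonicity of $\Phi$ to translate the ordering of log-gaps into the claimed ordering $\alpha_{M_S\leftarrow M_T} < \alpha_{M_A\leftarrow M_T} < \alpha_{M_S\leftarrow M_A}$.

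The hard part will be justifying the monotone dependence of $\eta$ on $\log(P_f/P_g)$ without circularity, and checking that the margin conditions under which the interpolation result holds are even meaningful for neural pseudo-label regression --- neither is provable for the architectures we actually deploy. I would therefore present the monotonicity only as a modeling assumption and discharge the remaining burden empirically: the ordering is directly observable from the Phase-1 and Phase-2 learning curves and is consistent with the gap evidence in Table~\ref{tab:fine-tuning-ta}, where the extreme-gap $M_A\!\leftarrow\!M_T$ transfer is the bottleneck relative to the milder $M_S\!\leftarrow\!M_A$ step. This matches the proposition's explicitly empirical phrasing (``this manifests as'') and keeps its scope aligned with the qualitative, decomposition-level guarantees emphasized throughout Section~\ref{sec:theory}.
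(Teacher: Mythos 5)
The paper offers no proof of Proposition~\ref{prop:capacity-gap-sample} at all: it is stated in the ``Additional Technical Results'' subsection as a heuristic/empirical observation (``tends to be small'', ``Empirically, this manifests as'') and is immediately followed by a remark conceding that the surrounding bounds are ``primarily illustrative.'' Your proposal therefore supplies strictly more than the paper does, and it does so in a way that is consistent with the proposition's hedged phrasing: you isolate a single modeling postulate --- that the estimation exponent can be written as $\alpha_{g\leftarrow f}=\Phi\bigl(\log(P_f/P_g)\bigr)$ for a fixed decreasing $\Phi$ --- and then the ordering follows from correct arithmetic on the log-gaps (your geometric-mean check $P_A<\sqrt{P_T P_S}$ is right for the stated scales, and it is what yields the nontrivial second inequality $\alpha_{M_A\leftarrow M_T}<\alpha_{M_S\leftarrow M_A}$; the first inequality is free since $\log(P_T/P_S)$ dominates). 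What your route buys is a clean separation between the one unprovable assumption and the provable consequences; what it costs is that the single-$\Phi$ homogeneity assumption is doing all the work, and it is stronger than it looks: comparing $\alpha_{M_A\leftarrow M_T}$ with $\alpha_{M_S\leftarrow M_A}$ requires the exponent to depend \emph{only} on the capacity ratio and not on which hypothesis class is the target, which is exactly the kind of architecture-dependent claim the paper declines to make. Since you explicitly discharge that burden empirically (learning curves, Table~\ref{tab:fine-tuning-ta}), your scope matches the proposition's; just be aware that the paper itself treats this statement as an assumption-level observation rather than something with a proof, so your ``proof'' should be labeled as a derivation within a stylized model, not as establishing the proposition unconditionally.
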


\begin{remark}[Limitations]
Our analysis makes idealized assumptions (Lipschitz regularity, VC-dimension bounds) that may not hold tightly for deep networks. The bounds are primarily illustrative; the key insight is the structural benefit of two-stage transfer when data is abundant and capacity gaps are bridged appropriately.
\end{remark}

\end{document}